\documentclass[]{fairmeta}

\usepackage[utf8]{inputenc} 
\usepackage[T1]{fontenc}    
\usepackage{url}            
\usepackage{booktabs}       
\usepackage{caption}
\usepackage{graphicx}
\usepackage{amssymb}
\usepackage{amsmath}
\usepackage{amsthm}
\usepackage{amsfonts}       
\usepackage{nicefrac}       
\usepackage{microtype}      
\usepackage{algorithm}
\usepackage[endLComment=,italicComments=false]{algpseudocodex}
\usepackage{xcolor, colortbl}
\usepackage{tabularx}
\usepackage{multirow}
\usepackage{caption}
\usepackage{subcaption}
\usepackage{wrapfig}
\definecolor{lightcyan}{rgb}{0.88, 1, 1}
\usepackage[title]{appendix}
\usepackage{titlesec}
\usepackage{soul}  

\usepackage{tikz}
\usetikzlibrary{patterns}

\theoremstyle{plain}
\newtheorem{theorem}{Theorem}[section]
\newtheorem{proposition}[theorem]{Proposition}
\newtheorem{lemma}[theorem]{Lemma}
\newtheorem{corollary}[theorem]{Corollary}
\theoremstyle{definition}

\theoremstyle{remark}

\newlength\savewidth

\makeatletter

\newcommand{\customdashline}[3]{%
    \noalign{\vbox{\hrule height 0pt%
        \hbox to\linewidth{\cleaders\hbox to \dimexpr #1 + #2\relax{\hss\rule{#1}{#3}\hss}\hfill}%
    }}%
}
\makeatother

\renewcommand{\paragraph}[1]{\vspace{-.3em}\noindent\textbf{#1}}
\newcolumntype{x}[1]{>{\centering\arraybackslash}p{#1pt}}
\newcolumntype{y}[1]{>{\raggedright\arraybackslash}p{#1pt}}
\newcolumntype{z}[1]{>{\raggedleft\arraybackslash}p{#1pt}}
\newcommand{\app}{\raise.17ex\hbox{$\scriptstyle\sim$}}

\definecolor{deemph}{gray}{0.58}

\definecolor{baselinecolor}{gray}{.9}

\newcommand{\RNum}[1]{\uppercase\expandafter{\romannumeral #1\relax}}

\definecolor{lightcyan}{rgb}{0.88, 1, 1}
\definecolor{asparagus}{rgb}{0.53, 0.66, 0.42}
\definecolor{azure}{rgb}{0.0, 0.5, 1.0}
\definecolor{brightpink}{rgb}{1.0, 0.0, 0.5}
\definecolor{boston}{rgb}{0.8, 0.0, 0.0}
\definecolor{gray}{rgb}{0.75, 0.75, 0.75}
\definecolor{lightgray}{rgb}{0.88, 0.88, 0.88}
\definecolor{darkgray}{rgb}{0.50, 0.50, 0.50}
\definecolor{orange2}{rgb}{0.99, 0.86, 0.70}
\definecolor{pastelgray}{rgb}{0.81, 0.81, 0.77}
\definecolor{orangered}{rgb}{.99, .40, .00}
\definecolor{darkUT}{HTML}{BF5700}
\definecolor{lightUT}{HTML}{FFF1E6}
\definecolor{darkUT_B}{HTML}{005F86}
\definecolor{lightUT_B}{HTML}{F5FDFF}
\definecolor{darkUT_G}{HTML}{333F48}
\definecolor{lightUT_G}{HTML}{F5F4F0}
\definecolor{darkgreen}{HTML}{F5F4F0}
\definecolor{apricot}{HTML}{FFD7B8}

\usepackage[most]{tcolorbox}
\newtcolorbox{remarkbox}[1][]{
  enhanced,
  breakable,
  colback=utlight!10,        
  colframe=utlight,             
  colbacktitle=utmain!80!black,         
  coltitle=white,                 
  fonttitle=\bfseries,            
  title=#1,                       
  boxed title style={
    frame empty,
    left=2pt,
    right=2pt,
    top=2pt,
    bottom=2pt,
  },
  rounded corners,                
  boxrule=1pt,                    
  arc=1mm,                        
  before skip=1em,                
  after skip=1em,                 
}

\usepackage{tikz}

\newcounter{takeawayonly}

\newcounter{closing}

\definecolor{mlBg}{HTML}{FAFAFA}      
\definecolor{mlFg}{HTML}{37474F}      
\definecolor{mlKeyword}{HTML}{7C4DFF} 
\definecolor{mlConst}{HTML}{D81B60}   
\definecolor{mlString}{HTML}{91B859}  
\definecolor{mlComment}{HTML}{90A4AE} 
\definecolor{mlBuiltin}{HTML}{00BCD4}
\definecolor{mlLib}{HTML}{00897B}    
\definecolor{mlLineno}{HTML}{B0BEC5}

\usepackage[T1]{fontenc}
\usepackage[scaled=0.9]{inconsolata}
\newcommand{\pyfont}{\ttfamily}
\usepackage{listings}
\usepackage{caption}
\lstdefinestyle{py-material-light}{
  language=Python,
  backgroundcolor=\color{mlBg},
  basicstyle=\pyfont\small\color{mlFg},
  showstringspaces=false,
  breaklines=true,
  tabsize=4,
  keepspaces=true,
  columns=fullflexible,
  numbers=none, 
  commentstyle=\itshape\color{mlComment},
  stringstyle=\color{mlString},
  keywordstyle=\bfseries\color{mlKeyword},
  morekeywords=[2]{True,False,None},
  keywordstyle=[2]\bfseries\color{mlConst},
  emph={print,range,len,enumerate,zip,dict,list,set,tuple,int,float,str,bool,open,
        sorted,sum,min,max,any,all,abs,super,isinstance,type,property},
  emphstyle=\color{mlBuiltin},
  emph={[2]{np,pd,plt,torch,tf,jax,sklearn}},
  emphstyle=[2]\color{mlLib}
}

\definecolor{aoBg}{HTML}{FAFAFA}
\definecolor{aoFg}{HTML}{383A42}
\definecolor{aoComment}{HTML}{A0A1A7}

\usepackage[cjk]{kotex}
\usepackage[font=small]{subcaption}
\usepackage[font=footnotesize]{subcaption}

\usepackage{xcolor}
\usepackage{lipsum}
\usepackage{adjustbox}
\usepackage{minted} 

\usepackage{listings}

\title{Foundation-Preserving Adaptation via Generalized Rayleigh-Quotient Optimization}

\author[1]{Dongjun Kim}
\author[2]{Adrian de Wynter}
\author[3]{Huancheng Chen}
\author[4]{Heasung Kim}
\author[1]{Haris Vikalo}

\affiliation[1]{The University of Texas at Austin}
\affiliation[2]{Microsoft}
\affiliation[3]{Microsoft AI}
\affiliation[4]{Meta}

\abstract{
While fine-tuning effectively adapts foundation models to specialized downstream tasks, it can degrade non-target capabilities acquired during pretraining. Existing forgetting-aware methods typically seek safer updates through specialized initialization or fixed constraints, but do not regulate the adaptation–preservation trade-off during training. We propose Foundation-Preserving LoRA (FoLoRA), a forgetting-aware optimization framework. Guided by a first-order preservation condition, FoLoRA defines a forgetting penalty over pretraining-proxy activations and a task utility over downstream-task activations. It then scores update directions by task utility per unit forgetting penalty via a generalized 
Rayleigh quotient. The resulting spectral coordinate system enables direction-wise gated Adam updates, attenuating low utility-to-penalty directions during training. To estimate the forgetting penalty, FoLoRA constructs pretraining-proxy calibration data by sampling from the pretrained model rather than relying on a single proxy dataset. Experiments on math, code, and instruction-following adaptation show that FoLoRA achieves the strongest preservation–adaptation balance over baselines, improving target-task performance with best aggregate preservation of non-target capabilities.
}

\date{\today}

\metadata[Correspondence]{Dongjun Kim at \url{dongjungim20@utexas.edu}}

\begin{document}
\maketitle
\setcounter{tocdepth}{1}
\addtocontents{toc}{\protect\setcounter{tocdepth}{-1}}

\section{Introduction}
\label{intro}
Fine-tuning has become a standard way of adapting large language models (LLMs)~\cite{llm1, llm2} to specialized tasks such as instruction following~\cite{instfollowing}, commonsense reasoning~\cite{commonsense}, question answering~\cite{qa1}, and preference alignment~\cite{preference}. However, target-task adaptation can degrade non-target capabilities acquired during pretraining, a phenomenon known as \textit{catastrophic forgetting}~\cite{lora_forgetting,forget2,forget3,forgetting5}. For foundation models, this degradation is especially problematic: pretrained knowledge not only supports broad non-target capabilities across domains, but also serves as the substrate for adapting to target tasks. Effective adaptation should therefore improve the target task without disrupting pretrained behavior that supports generalization and reuse. Low-Rank Adaptation (LoRA) provides an attractive setting for this trade-off because adaptation is expressed through a compact low-rank update~\cite{lora_forgetting}. Recent forgetting-aware fine-tuning methods suggest that the geometry of these directions is crucial. Initialization-based methods such as MiLoRA~\cite{milora}, CorDA~\cite{corda}, and LoRA-Null~\cite{loranull} initialize adapters in safer subspaces using minor directions derived from pretrained weights or activation statistics, while OPLoRA~\cite{oplora} restricts updates away from dominant pretrained directions. These approaches have advanced foundation-preserving fine-tuning but they typically encode preservation through a one-time initialization, a fixed subspace choice, or a hard projection constraint. Consequently, they do not directly ask, during optimization, whether an update direction is useful enough for the downstream task to justify its preservation cost. 

This motivates our central question: 

\begin{center}
\emph{Can LoRA fine-tuning regulate update directions \\according to their downstream utility per unit pretrained-behavior drift?} 
\end{center}

Recent evidence from low-perplexity fine-tuning~\cite{stm} further suggests that forgetting is tied not only to the chosen initial subspace, but also by the dynamics of the fine-tuning signal: filtering high-perplexity tokens reduces non-target degradation and is associated with lower-loss, less disruptive updates. This motivates an optimizer-level mechanism that explicitly balances task-side utility and preservation cost through direction-wise modulation of low-rank updates during training process.

In this paper, we propose Foundation-Preserving LoRA (FoLoRA), a forgetting-aware optimization framework for low-rank adaptation. Starting from a first-order condition for preserving the pretrained behavior of a nonlinear block, FoLoRA defines two activation-induced quantities: a forgetting penalty over pretraining-proxy activations and a task utility over downstream-task data. Rather than selecting directions using either quantity alone, FoLoRA scores candidate update directions by the ratio between task utility and forgetting penalty, expressed as a generalized Rayleigh quotient. Optimizing this quotient leads to a generalized eigenvalue problem whose solution defines a spectral coordinate system in which each direction receives an explicit utility-to-penalty ratio/score. FoLoRA then performs Adam-style updates in this coordinate system with direction-wise spectral gating, attenuating directions whose task utility is small relative to their forgetting penalty. This differs from prior initialization- or projection-based methods by regulating update directions throughout training rather than only constraining the initial or admissible subspace. FoLoRA also addresses a practical limitation of activation-based preservation methods: the need for calibration data that reflects the pretraining distribution. Existing methods often use a single benchmark dataset, such as NQ Open~\cite{corda, loranull}, as a proxy for pretraining data. Such proxies can be narrow and may emphasize only a limited subset of pretrained capabilities. FoLoRA instead constructs pretrain-proxy data from model-generated sample without requiring the original pretraining corpus.

Our main contributions are: \textbf{(1)} We formulate foundation-preserving adaptation as a generalized Rayleigh quotient optimization problem that compares downstream task utility with forgetting penalty; \textbf{(2)} We construct a generalized-eigenbasis for LoRA update directions from the activation-induced task utility and forgetting penalty, and introduce spectral-gated Adam (SAdam) optimizer for direction-wise control of updates throughout training; \textbf{(3)} We propose model-generated calibration, a procedure to construct pretraining-proxy data from sequences generated by the pretrained model, avoiding reliance on an external benchmark; \textbf{(4)} We evaluate FoLoRA across diverse tasks and benchmarks, showing a superior preservation-adaptation balance compared with existing baselines.

\section{Related Work}
\label{background}
\paragraph{Forgetting-Aware Fine-Tuning.} A growing line of work studies how fine-tuning can adapt pretrained language models to target tasks while limiting degradation of non-target capabilities. Prior work~\cite{lora_forgetting} showed that LoRA can mitigate catastrophic forgetting relative to full-weight fine-tuning, although restricting adaptation to low-rank updates may reduce target-task performance. Building on this observation, several methods seek safe LoRA initialization that are less likely to interfere with pretrained behavior. MiLoRA~\cite{milora} decomposes pretrained weight matrices by SVD and initializes LoRA adapters using minor singular components, with the goal of updating directions that are less dominant in the pretrained model. CorDA~\cite{corda} and LoRA-Null~\cite{loranull} make this selection context-aware by estimating activation covariance matrices from pretraining-proxy calibration data and initializing adapters along directions with lower variation under that proxy. These methods show that the geometry of initialized low-rank space is important for foundation preservation, but their preservation mechanisms are fixed before or at the start of fine-tuning. Once fine-tuning begins, the optimizer is not explicitly guided by a direction-wise comparison between task utility and forgetting penalty. Moreover, for activation-based methods such as CorDA and LoRA-Null, the chosen initialization directions depend on the calibration data used to estimate the activation covariances; if this data is narrow, the resulting adapter initialization may be overly specific to that proxy.

A related class of methods imposes explicit constraints during training. OPLoRA~\cite{oplora}, for example, applies orthogonal projections to restrict LoRA updates away from directions associated with principal components of pretrained weights. Such constraints can reduce interference with pretrained behavior, but hard projections may also suppress directions useful for the target task, thereby limiting downstream adaptation. FoLoRA differs from both initialization-based and hard-projection approaches by making the optimization process itself jointly adaptation- and preservation-aware. It defines activation-induced forgetting penalties and task utilities, assigns each update direction a utility-to-penalty ratio, and applies soft spectral gating throughout training, rather than relying on a single preservation metric through favorable initialization or fixed projection constraints.

\section{Method}
\label{method}

\begin{figure*}[t]
    \centering
    \includegraphics[width=1.0\textwidth]{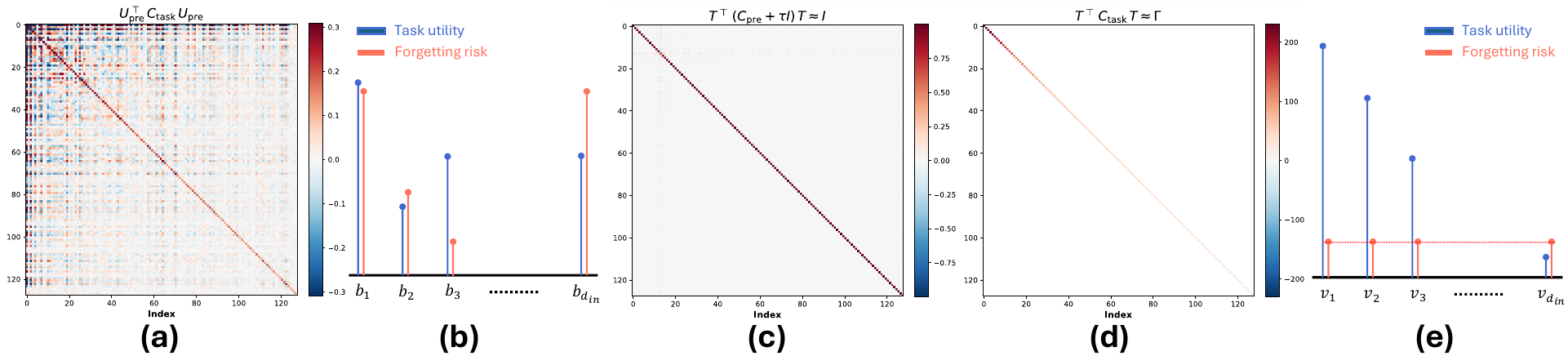}
    \caption{
    Illustration of the generalized spectral basis (Layer-15 MLP downproj). 
    (a) $U_{\mathrm{pre}}^\top C_{\mathrm{task}} U_{\mathrm{pre}}$ is not diagonal, indicating misalignment between pretraining and task covariance structures. 
    (b) Example basis configuration. In the original Euclidean basis, task utility and forgetting penalty need not be aligned. 
    (c) The generalized basis whitens the forgetting penalty, $V^\top H V=I$. 
    (d) The same basis diagonalizes task utility, $V^\top C_{\mathrm{task}}V=\Gamma$. 
    (e) The resulting $H$-orthogonal coordinate system ranks directions by utility-to-penalty ratio.}
    \label{fig:first_fig}
    
\end{figure*}
We first derive a first-order condition for preserving the pretrained behavior of a nonlinear block in Sec.~\ref{sec:3.1}. Sec.~\ref{sec:3.2} uses this condition to define a forgetting penalty and task utility, compared through a generalized Rayleigh quotient. Optimizing this quotient yields generalized eigenvectors that form the spectral coordinate system used by FoLoRA. Sec.~\ref{sec:3.3} introduces spectral-gated Adam (SAdam), which applies direction-wise gating to Adam updates in this coordinate system. Finally, Sec.~\ref{sec:selfseeded} describes model-generated calibration, where sequences generated by the pretrained model serve as pretrain-proxy calibration data for estimating the activation statistics for forgetting penalty. 

\subsection{First-Order Condition for Preserving Pretrained Behavior}
\label{sec:3.1}

Let \(X_{\rm pre}=[x_1,\ldots,x_T]\in\mathbb{R}^{d_{\rm in}\times T}\) denote the block-input activations collected from pretraining data, where each column corresponds to one token position and $T$ is the token-sequence length.
Consider a non-linear block, $f_{W_0}(X_\mathrm{pre})=\sigma(W_0X_\mathrm{pre}+b)$, 
where \(W_0\in\mathbb{R}^{d_{\mathrm{out}}\times d_{\mathrm{in}}}\), \(b\in\mathbb{R}^{d_{\mathrm{out}}\times T}\), and \(\sigma\)  
is element-wise non-linear mapping. Define $Z_0=W_0X_\mathrm{pre}+b \in \mathbb{R}^{d_\mathrm{out} \times T}$. For a small weight perturbation \(\Delta W\), we have
\begin{equation}
f_{W_0+\Delta W}(X_\mathrm{pre})
=
\sigma\big((W_0+\Delta W)X_\mathrm{pre}+b \big)
=
\sigma(Z_0+\Delta W X_\mathrm{pre}).
\end{equation}
A first-order Taylor expansion of \(\sigma\) around \(Z_0\) results in
\begin{equation}
f_{W_0+\Delta W}(X_\mathrm{pre})
\approx
f_{W_0}(X_\mathrm{pre})+J_\sigma(Z_0)[\Delta W X_\mathrm{pre}], 
\end{equation}
where \(J_\sigma(Z_0)\) denotes the Jacobian of \(\sigma\) at \(Z_0\). Therefore, the first-order output drift on activations from the pretraining distribution is controlled by \(\Delta W X_{\mathrm{pre}}\), and preserving pretrained behavior can be encouraged by penalizing the magnitude of this perturbation.


Under the LoRA parameterization, \(W=W_0+BA\), we have \(A=A_0+\Delta A\) and \(B=B_0+\Delta B\), where \(A_0,B_0\) denote the initial LoRA factors and \(\Delta A,\Delta B\) are their updates. Assuming the initial adapter does not perturb the pretrained model (e.g., \(B_0A_0=0\)), the weight perturbation satisfies
\begin{equation}
\label{eq:lora_init}
\Delta W X_{\mathrm{pre}} = (\Delta B A_0) X_{\mathrm{pre}}+(B_0+\Delta B)\Delta A  X_{\mathrm{pre}}.
\end{equation}
If \(A_0\) is initialized in the null space of \(X_{\mathrm{pre}}\), then \(A_0X_{\mathrm{pre}}=0\), and hence \((\Delta B A_0)X_{\mathrm{pre}}=0\). The perturbation on pretraining-proxy activations therefore reduces to \((B_0+\Delta B)\Delta A X_{\mathrm{pre}}\).  Since
\begin{equation}
\label{eq:drift_eq}
\|(B_0+\Delta B)\Delta A X_{\mathrm{pre}}\|_F
\le
\|B_0+\Delta B\|_2\,\|\Delta A X_{\mathrm{pre}}\|_F,
\end{equation}
we can limit this drift by controlling \(\|\Delta A X_{\mathrm{pre}}\|_F^2\). This motivates the forgetting penalty
\begin{equation}
\mathcal R_{\mathrm{pre}}(\Delta A)
:=
\|\Delta A X_{\mathrm{pre}}\|_F^2
=
\operatorname{Tr}(\Delta A C_{\mathrm{pre}}\Delta A^\top),
\qquad
C_{\mathrm{pre}}:=X_{\mathrm{pre}}X_{\mathrm{pre}}^\top.
\end{equation}
For numerical stability, we use a ridge-regularized form, which makes $H$ strictly positive definite:
\begin{equation}
\mathcal R_{\mathrm{pre}}^\tau(\Delta A)
:=
\operatorname{Tr}(\Delta A H \Delta A^\top),
\qquad
H:=C_{\mathrm{pre}}+\tau I_{d_{\mathrm{in}}} \succ 0,
\qquad \tau>0.
\end{equation}

Under the same first-order view, an update that is useful for downstream task adaptation should induce a large response on target-task activations. This motivates the task utility
\begin{equation}
\mathcal U_{\mathrm{task}}(\Delta A)
:=
\|\Delta A X_{\mathrm{task}}\|_F^2
=
\operatorname{Tr}(\Delta A C_{\mathrm{task}}\Delta A^\top),
\qquad
C_{\mathrm{task}}:=X_{\mathrm{task}}X_{\mathrm{task}}^\top.
\end{equation}
Here, \(C_{\mathrm{task}}\) measures how strongly an update direction affects target-task activations, while \(H\) quantifies the forgetting penalty induced by the same direction. To expose this trade-off at the level of individual directions, consider a rank-one perturbation of $A$,
\begin{equation}
\Delta A = uv^\top,
\qquad
u\in\mathbb{R}^r,\quad v\in\mathbb{R}^{d_{\mathrm{in}}}.
\end{equation}
Then the task utility and forgetting penalty become
\begin{equation}
\mathcal U_{\mathrm{task}}(u,v)
=
\|u\|_2^2\,v^\top C_{\mathrm{task}}v,
\qquad
\mathcal R_{\mathrm{pre}}^\tau(u,v)
=
\|u\|_2^2\,v^\top H v.
\end{equation}
Since the common scale factor \(\|u\|_2^2\) cancels, the best input-side direction under this rank-one view maximizes task utility per unit forgetting penalty, leading to the generalized Rayleigh quotient~\cite{grc, grc2}
\begin{equation}
\rho(v):=
\frac{\mathcal U_{\mathrm{task}}(u,v)}{\mathcal R_{\mathrm{pre}}^\tau(u,v)}
=
\frac{v^\top C_{\mathrm{task}}v}{v^\top H v}.
\end{equation}
This rank-one reduction connects the first-order preservation condition to the spectral construction below. In Sec.~\ref{sec:3.2}, optimizing the associated Rayleigh quotient leads to generalized eigenvectors that form a basis whose directions are ranked by \(\rho(v)\), i.e., task utility per unit forgetting penalty.

\subsection{Generalized Eigenbasis for Forgetting-Aware Optimization}
\label{sec:3.2}
We estimate \(C_{\mathrm{pre}}\) and \(C_{\mathrm{task}}\) by collecting block-input activations
from pretraining-proxy calibration data \(D_{\mathrm{pre}}\) and target-task data
\(D_{\mathrm{task}}\), respectively, using the pretrained model. Since these
activation distributions differ, the corresponding covariances need not share eigenvectors.
To diagnose this, we express \(C_{\mathrm{task}}\) in the eigenbasis \(U_{\mathrm{pre}}\) of
\(C_{\mathrm{pre}}\). The off-diagonal mass in Fig.~\ref{fig:first_fig}(a) indicates mixing of pretraining modes by target-task variation. Therefore, the \(C_{\mathrm{pre}}\) eigenbasis diagonalizes the forgetting penalty but leaves
task utility coupled, while the \(C_{\mathrm{task}}\) eigenbasis does the opposite. Optimizing in either single-covariance
basis cannot capture this entangled geometry. As illustrated in
Fig.~\ref{fig:first_fig}(b), a direction with high task utility alone need not be optimal
once forgetting penalty is considered. We therefore search for directions that directly
maximize \(\rho(v)\), the task utility per unit forgetting penalty.

Because $\rho(\alpha v)=\rho(v)$ for any $\alpha \neq 0$, scale-invariant problem becomes the constraint optimization 
\begin{equation}
\label{eq:optimization_problem}
\max_{v \in \mathbb{R}^{d_\mathrm{in}}} \; v^\top C_{\mathrm{task}} v
\qquad
\text{s.t.}
\qquad
v^\top H v = 1.
\end{equation}
Since $H \succ 0$, the feasible set $\{v : v^\top H v = 1\}$ is compact, so the maximum can be attained. The corresponding Lagrangian of Eq.~\ref{eq:optimization_problem} is
\begin{equation}
\mathcal{L}(v,\mu)
=
v^\top C_{\mathrm{task}} v
-
\mu \bigl(v^\top H v - 1\bigr).
\end{equation}
Taking the derivative of the Lagrangian with respect to $v$ and setting it to zero gives
\begin{equation}
\nabla_v \mathcal{L}(v,\mu)
=
2 C_{\mathrm{task}} v - 2 \mu H v
=
0,
\end{equation}
which yields the generalized eigenvalue equation
\begin{equation}
C_{\mathrm{task}} v = \mu H v.
\end{equation}
Thus, every stationary point of the constrained maximization is a generalized eigenvector of the pair $(C_{\mathrm{task}}, H)$. Moreover, left-multiplying by $v^\top$ and using the normalization $v^\top H v = 1$ yields
\begin{equation}
\mu = v^\top C_{\mathrm{task}} v = \rho(v),
\end{equation}
so the generalized eigenvalue is equal to the achieved ratio of task utility and forgetting penalty. Therefore, any vector in the generalized eigenspace associated with the largest generalized eigenvalue maximizes Eq.~\ref{eq:optimization_problem}, and the maximum value is that eigenvalue.

Equivalently, setting $v = H^{-1/2}r$ gives the whitened eigenproblem
\begin{equation}
\label{eq:eigenproblem}
H^{-1/2} C_{\mathrm{task}} H^{-1/2}
=
R \Gamma R^\top,
\qquad
\Gamma = \mathrm{diag}(\gamma_1,\ldots,\gamma_{d_\mathrm{in}}),
\qquad
R = [r_1, \dots, r_{d_\mathrm{in}}]
\end{equation}
where eigenvalues are ordered as \(\gamma_1\ge \cdots \ge \gamma_{d_{\mathrm{in}}}\). The corresponding generalized eigenvectors are
\begin{equation}
V = H^{-1/2} R = [v_1,\ldots,v_{d_\mathrm{in}}], \qquad V^\top H V = I.
\end{equation}
Here, $\gamma_i$ is the task utility per unit forgetting penalty achieved by direction $v_i$. Large $\gamma_i$ values indicate directions that lead to high target task utility while incurring a low forgetting penalty, whereas small $\gamma_i$ values indicate directions with limited task utility relative to the penalty they incur. In the resulting coordinates, the forgetting penalty is whitened and the task utility is diagonalized, i.e., $V^\top H V=I$ and $V^\top C_{\mathrm{task}}V=\Gamma$. The adaptation-forgetting trade-off therefore decomposes into scalar direction scores indexed by the generalized eigenvalues. Fig.~\ref{fig:first_fig}(c)-(e) illustrate the whitened forgetting penalty, diagonalized task utility, and the resulting $H$-orthogonal coordinate system.


\subsection{Spectral-Gated Adam in the Generalized Eigenbasis} 
\label{sec:3.3}
Using the generalized eigenbasis from Sec.~\ref{sec:3.2}, we track Adam's first- and second-moment statistics in the corresponding $H$-orthogonal coordinate system, where the forgetting penalty is normalized and the adaptation-forgetting trade-off is comparable direction by direction. Under this change of coordinates, $A$ is represented as $\bar A = A H^{1/2}R$, $A = \bar A R^\top H^{-1/2}$. Let 
$G_t = \nabla_A \mathcal{L}(A_t)$ be the gradient of the training objective with respect to the LoRA down-projection matrix $A$ in the original Euclidean coordinates. By the definition of the gradient for a matrix-valued variable,
\begin{equation}
d\mathcal L
=
\langle G_t, dA\rangle_F
=
\langle G_t, d\bar A\,R^\top H^{-1/2}\rangle_F
=
\langle G_t H^{-1/2}R, d\bar A\rangle_F.
\end{equation}
Hence, the gradient in the transformed $H$-orthogonal coordinate system is
\begin{equation}
\bar G_t=\nabla_{\bar A}\mathcal L = G_t H^{-1/2}R.
\end{equation}
We then compute Adam~\cite{adam} moment estimates in the transformed coordinates,
\begin{equation}
\bar m_t = \beta_1 \bar m_{t-1} + (1-\beta_1)\bar G_t,
\qquad
\bar v_t = \beta_2 \bar v_{t-1} + (1-\beta_2)\bar G_t^{\odot 2},
\end{equation}
followed by the standard bias corrections
\begin{equation}
\hat{\bar m}_t = \frac{\bar m_t}{1-\beta_1^t},
\qquad
\hat{\bar v}_t = \frac{\bar v_t}{1-\beta_2^t}.
\end{equation}

\paragraph{Direction-wise gating by generalized eigenvalues.}
The generalized eigenvalues $\{\gamma_i\}$ provide scalar scores for modulating each transformed direction. We therefore define the monotone gate
\begin{equation}
s_i = \psi(\gamma_i) = \frac{\gamma_i}{\gamma_i + \lambda} \in [0,1),
\qquad \lambda > 0.
\end{equation}
This gate assigns larger weights to directions with high utility-to-penalty ratios and smaller weights to directions with low ratios. Importantly, we apply the gate to the preconditioned Adam step rather than to the raw gradient.
Let $\tilde{s}_i=\max\{s_i,s_{\min}\}$ and $\tilde{s}=(\tilde{s}_1,\ldots,\tilde{s}_{d_{\mathrm{in}}})$. The transformed update is
\begin{equation}
\Delta \bar{A}_t
=
-\eta
\left(
\frac{\hat{\bar m}_t}{\sqrt{\hat{\bar v}_t}+\epsilon}
\right)
\operatorname{diag}(\tilde{s}),
\end{equation}
where $s_{\min}$ is a hyperparameter that sets the minimum gate value. Thus, each transformed column receives its own additional multiplicative factor $\tilde{s}_i$ after Adam preconditioning. Finally, we map the update from the generalized coordinates back to the original parameterization and update the down-projection matrix according to
\begin{equation}
\Delta A_t = \Delta \bar{A}_t R^\top H^{-1/2},
\qquad
A_{t+1} = A_t + \Delta A_t.
\end{equation}


\subsection{Model-Generated Pretraining-Proxy Calibration}
\label{sec:selfseeded}
We next describe how FoLoRA constructs the pretraining-proxy calibration dataset $D_{\mathrm{pre}}$ used to estimate $C_{\mathrm{pre}}$.
Existing methods such as CorDA~\cite{corda} and LoRA-Null~\cite{loranull} use a fixed benchmark dataset, such as NQ Open~\cite{nqopen}, as a proxy for the pretraining distribution. However, a single dataset provides a narrow and biased view of the diverse distribution encountered during pretraining, resulting in limited coverage of foundation knowledge and consequently weaker preservation for behaviors not well represented in that dataset. In contrast, FoLoRA constructs $D_{\mathrm{pre}}$ using model-generated calibration: it samples calibration sequences from the autoregressive distribution induced by the pretrained model. This yields a model-derived proxy that is not tied to a single fixed benchmark, enabling more robust estimation of the pretraining-proxy activation statistics used in the forgetting penalty, and consequently providing broader protection coverage and stronger foundation preservation.

Let \(\mathcal V\) be a finite vocabulary with \(M:=|\mathcal V|\), and let
\(P_{\mathrm{data}}\in\Delta(\mathcal V^T)\) denote the pretrain distribution
over token sequences \(X=(X_1,\ldots,X_T) \in \mathcal{V}^T\), where \(T<\infty\). Define the data-supported
prefix set
\[
    \mathcal P_T^{\mathrm{supp}}
    :=
    \left\{
    (t,h):\,
    t\in[T],\ h\in\mathcal V^{t-1},\
    P_{\mathrm{data}}(X_{<t}=h)>0
    \right\}.
\]
For \((t,h)\in\mathcal P_T^{\mathrm{supp}}\), define the true next-token conditional
\[
    q_t(v\mid h)
    :=
    P_{\mathrm{data}}(X_t=v\mid X_{<t}=h),
    \qquad v\in\mathcal V.
\]
For unsupported prefixes, \(q_t(\cdot\mid h)\) may be defined arbitrarily, since such
prefixes do not affect the KL divergence below. For \(\alpha\in(0,1)\), define the
smoothed conditional and its logit map by
\[
    q_{t,\alpha}(v\mid h)
    =
    (1-\alpha)q_t(v\mid h)+\frac{\alpha}{M},
    \qquad
    U_\alpha(t,h)
    =
    \bigl(\log q_{t,\alpha}(v\mid h)\bigr)_{v\in\mathcal V} \in \mathbb R^M.
\]

Given the model logit vector \(\widehat\ell_{t,\theta}(h)\in\mathbb R^M\), the
autoregressive model distribution is
\[
    P_\theta(x_{1:T})
    =
    \prod_{t=1}^T r_{t,\theta}(x_t\mid x_{<t}),
    \qquad
    r_{t,\theta}(\cdot\mid h)
    =
    \mathrm{softmax}\bigl(\widehat\ell_{t,\theta}(h)\bigr).
\]

\begin{theorem}[KL control under uniform logit approximation]
\label{thm:kl-control}
Fix \(\alpha\in(0,1)\) and \(\eta>0\). Suppose there exists a parameter \(\theta\)
such that
\begin{equation}
    \max_{(t,h)\in\mathcal P_T^{\mathrm{supp}}}
    \max_{v\in\mathcal V}
    \left|
        \widehat \ell_{t,\theta}^v(h)
        -
        \log q_{t,\alpha}(v\mid h)
    \right|
    \leq \eta .
    \label{eq:uniform-logit-approx_main}
\end{equation}
Then the autoregressive model \(P_\theta\) satisfies
\begin{equation}
    D_{\mathrm{KL}}
    \left(
        P_{\mathrm{data}}
        \,\middle\|\,
        P_\theta
    \right)
    \leq
    T
    \left(
        -\log(1-\alpha)
        +
        2\eta
    \right).
    \label{eq:kl-bound-alpha-eta}
\end{equation}
Consequently, if \(\alpha\) and \(\eta\) are such that
\(T(-\log(1-\alpha)+2\eta)\leq \epsilon_{\mathrm{KL}}\), then
\(D_{\mathrm{KL}}(P_{\mathrm{data}}\|P_\theta)\leq \epsilon_{\mathrm{KL}}\).
\end{theorem}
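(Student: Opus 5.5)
The plan is to use the chain rule for KL divergence on the autoregressive factorization, so the sequence-level divergence becomes a sum of $T$ expected per-step conditional divergences. We then bound each per-step term uniformly by $-\log(1-\alpha)+2\eta$. Concretely,
\[
D_{\mathrm{KL}}(P_{\mathrm{data}}\|P_\theta)=\sum_{t=1}^T \mathbb{E}_{h\sim P_{\mathrm{data}}(X_{<t})}\Bigl[D_{\mathrm{KL}}\bigl(q_t(\cdot\mid h)\,\big\|\,r_{t,\theta}(\cdot\mid h)\bigr)\Bigr].
\]
The expectation only charges prefixes with $P_{\mathrm{data}}(X_{<t}=h)>0$, i.e.\ $(t,h)\in\mathcal P_T^{\mathrm{supp}}$, which is exactly where hypothesis \eqref{eq:uniform-logit-approx_main} applies. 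So it suffices to show, for every supported $(t,h)$,
\[
D_{\mathrm{KL}}\bigl(q_t\|r_{t,\theta}\bigr)\le -\log(1-\alpha)+2\eta .
\]
If $r_{t,\theta}$ ever assigned zero mass the KL could be infinite, but a softmax is strictly positive, so every term is finite.

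For the per-step bound, split the log-ratio through the smoothed conditional:
\[
\log\frac{q_t(v)}{r_{t,\theta}(v)}=\log\frac{q_t(v)}{q_{t,\alpha}(v)}+\log\frac{q_{t,\alpha}(v)}{r_{t,\theta}(v)}.
\]

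\textbf{First term.} Since $q_{t,\alpha}(v)\ge(1-\alpha)q_t(v)$, we get $\log(q_t/q_{t,\alpha})\le-\log(1-\alpha)$ wherever $q_t(v)>0$. Terms with $q_t(v)=0$ contribute nothing. Taking the expectation under $q_t$ gives at most $-\log(1-\alpha)$.

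\textbf{Second term.} Write $\ell^v=\widehat\ell^v_{t,\theta}(h)$ and $\epsilon_v=\ell^v-\log q_{t,\alpha}(v)$, so $|\epsilon_v|\le\eta$. Then
\[
\log r_{t,\theta}(v)=\ell^v-\log\sum_{w}e^{\ell^w}=\log q_{t,\alpha}(v)+\epsilon_v-\log\sum_w q_{t,\alpha}(w)e^{\epsilon_w}.
\]
Because $q_{t,\alpha}$ is a probability vector, $\sum_w q_{t,\alpha}(w)e^{\epsilon_w}\le e^{\eta}$. Hence $\log r_{t,\theta}(v)\ge\log q_{t,\alpha}(v)-2\eta$. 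This gives $\log(q_{t,\alpha}/r_{t,\theta})\le 2\eta$ pointwise, and therefore also in $q_t$-expectation.

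Adding the two bounds gives the per-step inequality. Summing over $t$ and averaging over prefixes gives \eqref{eq:kl-bound-alpha-eta}. The final "consequently" clause follows immediately by monotonicity.

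The main obstacle is the normalization step in the second term. The hypothesis controls raw logits, not probabilities, so we must check that the softmax log-partition shifts by at most $\eta$ and does not amplify the error. This works because the $\log q_{t,\alpha}$ are already normalized log-probabilities, which is what produces the factor $2\eta$ rather than something depending on $M$. A secondary point is the support issue: we only evaluate on data-supported prefixes and drop zero-probability tokens, so that neither the arbitrary definition of $q_t$ off-support nor terms like $0\log 0$ cause trouble.
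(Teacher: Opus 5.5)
Your proposal is correct and follows the paper's proof essentially step for step. Both arguments apply the KL chain rule over the $T$ autoregressive factors, split each per-prefix divergence through $q_{t,\alpha}$, bound the smoothing term by $-\log(1-\alpha)$ using $q_{t,\alpha}\ge(1-\alpha)q_t$, and bound the logit-error term by $2\eta$ using $\log\sum_w q_{t,\alpha}(w)e^{\epsilon_w}\le\eta$. The only cosmetic difference is that you bound $\log(q_{t,\alpha}/r_{t,\theta})\le 2\eta$ pointwise, whereas the paper first takes the $q_t$-expectation and then bounds $-\sum_v q(v)\Delta_v\le\eta$ and $\log Z_\Delta\le\eta$ separately.
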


The uniform approximation premise in Theorem~\ref{thm:kl-control} is motivated by the universal approximation property of Transformer architectures~\cite{yu2026attentionheadcount, uat, jiang2024approximation}. In particular, sufficiently expressive Transformer classes can approximate the smoothed target logit map $H_\alpha$ uniformly on data-supported prefixes, which in turn implies KL control between the induced autoregressive distribution and the pretraining distribution. The proof of Theorem~\ref{thm:kl-control} is provided in Appendix~\ref{appen:ua-to-kl}.


Motivated by Theorem~\ref{thm:kl-control}, we construct $D_{\mathrm{pre}}$ through model-generated calibration. Given prompts $\rho_1,\ldots,\rho_N$, we sample calibration sequences independently from the pretrained model,
\begin{equation}
    c_i \sim p_{\theta_0}(x_{1:\ell} \mid \rho_i),
    \qquad i=1,\ldots,N,
\end{equation}
where $\theta_0$ denotes the pretrained model and $\ell$ is the generation length. Unless otherwise specified, we set $\rho_i$ to the $\mathrm{BOS}$ token for unconditional generation. This yields the model-generated calibration set
$\mathcal{S}_{\mathrm{MG}} \triangleq \{c_i\}_{i=1}^N$, which serves as the pretraining-proxy dataset $D_{\mathrm{pre}}$. We estimate $C_{\mathrm{pre}}$ by running a forward pass on $\mathcal{S}_{\mathrm{MG}}$ and collecting the corresponding block-input activations.

\section{Experiments}
\label{exp}

\paragraph{Models and Datasets.}
Following prior work~\cite{loranull, corda, milora}, we fine-tune pretrained LLaMA2-7B~\cite{llama2} on three downstream adaptation tasks: math, code, and instruction following. To evaluate preservation of pretrained capabilities, we use as benchmarks TriviaQA~\cite{triviaqa}, NQ Open~\cite{nqopen}, and WebQS~\cite{webqs}. For math adaptation, models are trained on MetaMathQA~\cite{metamath} and evaluated on GSM8K~\cite{gsm8k} and MATH~\cite{math}. For code adaptation, models are trained on CodeFeedback~\cite{codefeedback} and evaluated on HumanEval~\cite{humaneval} and MBPP~\cite{mbpp}. For instruction following, models are trained on WizardLM-Evol-Instruct~\cite{wizard} and evaluated on IFEVAL~\cite{ifeval}. We report exact-match scores on the preservation benchmarks; we also report Avg1, the arithmetic mean of the three preservation scores, and Avg1(\%), the preservation ratio relative to the original pretrained model. For downstream adaptation, we report task-specific score, as well as the average of those scores, Avg2. To jointly evaluate task adaptation and preservation, we report GM, the geometric mean of Avg1 and Avg2.


\begin{table*}[t]
\centering
\small

\begin{subtable}{\linewidth}
\centering
\subcaption{LLaMA-2-7B finetuned on math (MetaMathQA)}
\label{tab:llama2_math}
\setlength{\tabcolsep}{1mm}
\resizebox{\textwidth}{!}{%
\begin{tabular}{l|c|ccccc|ccc|c}
    \toprule[1.5pt]
    & & \multicolumn{5}{c|}{\textbf{Foundation-Preservation}} & \multicolumn{3}{c|}{\textbf{Adaptation}} & \\
    \cmidrule(lr){3-7} \cmidrule(lr){8-10}
    Method & \textbf{\#Param} & Trivia QA & NQ Open & WebQS & Avg1 & Avg1(\%) & GSM8K & Math & Avg2 & GM \\
    \midrule
    LLaMA-2-7B & - & 52.51 & 16.91 & 5.88 & 25.10 & 100.00 & - & - & - & - \\
    \midrule
    LoRA~\cite{lora}      & 320M & 45.06      & 1.535       & 6.64        & 17.74      & 70.69 & 42.92      & 6.09        & 24.50     & 20.85 \\
    MiLoRA~\cite{milora}    & 320M & 48.28      & 3.91 & 6.30         & 19.5        & 77.68 & 40.51       & 5.44        & 22.97      & 21.16 \\
    CorDA~\cite{corda}     & 320M & 47.77      & 5.91        & 6.49 & 20.05 & 79.91 & 40.26       & 5.65 & 22.95 & 21.45 \\
    LoRA-Null~\cite{loranull} & 320M & \textbf{49.93} & \underline{7.14} & 6.58 & \underline{21.22} & \underline{84.55} & \underline{43.89} & \underline{6.36} & \underline{25.12} & \underline{23.08} \\
    OPLoRA~\cite{oplora}    & 320M & 49.14      & 2.67        & \underline{7.48}        & 19.76 & 78.73 & 31.53       & 4.32        & 17.92      & 18.82 \\
    \midrule
    \rowcolor{metablue!10}
    FoLoRA (ours) & 320M & \underline{49.16} & \textbf{17.09} & \textbf{11.95} & \textbf{26.06} & \textbf{103.84} & \textbf{46.09} & \textbf{6.60} & \textbf{26.34} & \textbf{26.20} \\
    \bottomrule[1.5pt]
\end{tabular}
}
\end{subtable}

\vspace{2mm}

\begin{subtable}{\linewidth}
\centering
\subcaption{LLaMA-2-7B finetuned on code (CodeFeedback)}
\label{tab:llama2_code}
\setlength{\tabcolsep}{1mm}
\resizebox{\textwidth}{!}{%
\begin{tabular}{l|c|ccccc|ccc|c}
    \toprule[1.5pt]
    Method & \textbf{\#Param} & Trivia QA & NQ Open & WebQS & Avg1 & Avg1(\%) & HumanEval & MBPP & Avg2 & GM \\
    \midrule
    LLaMA-2-7B & - & 52.51 & 16.91 & 5.88 & 25.10 & 100.00 & - & - & - & - \\
    \midrule
    LoRA~\cite{lora}      & 320M & \textbf{51.36} & 10.77 & \underline{8.56} & \underline{23.57} & \underline{93.87} & 16.99 & 21.40 & 19.19 & 21.26 \\
    MiLoRA~\cite{milora}    & 320M & 49.28 & 11.80 & 7.10 & 22.72 & 90.54 & 17.49 & 20.22 & 18.86 & 20.70 \\
    CorDA~\cite{corda}     & 320M & 49.32 & 12.72 & 6.72 & 22.92 & 91.33 & 17.45 & 20.48 & 18.97 & 20.85 \\
    LoRA-Null~\cite{loranull} & 320M & 49.09 & \underline{13.78} & 6.61 & 23.16 & 92.27 & \underline{17.63} & 24.04 & \underline{20.83} & \underline{21.97} \\
    OPLoRA~\cite{oplora}  & 320M & 44.32 & 9.07 & 6.10 & 19.83 & 79.00 & 15.24 & \underline{25.20} & 20.22 & 20.02\\
    \midrule
    \rowcolor{metablue!10}
    FoLoRA (ours) & 320M & \underline{51.31} & \textbf{14.78} & \textbf{10.82} & \textbf{25.63} & \textbf{102.12} & \textbf{17.68} & \textbf{26.80} & \textbf{22.24} & \textbf{23.87} \\
    \bottomrule[1.5pt]
\end{tabular}
}
\end{subtable}

\begin{subtable}{\linewidth}
\centering
\subcaption{LLaMA-2-7B finetuned on instruction-following (WizardLM-Evol-Instruct)}
\label{tab:llama2_inst}
\setlength{\tabcolsep}{1mm}
\begin{tabular}{l|c|ccccc|c|c}
    \toprule[1.5pt]
    Method & \textbf{\#Param} & Trivia QA & NQ Open & WebQS & Avg1 & Avg1(\%) & IFEVAL & GM \\
    \midrule
    LLaMA-2-7B    & -    & 52.51 & 16.91 & 5.88 & 25.10      & 100.00          &   -    &   -   \\
    \midrule
    LoRA~\cite{lora}          & 320M & 42.76 & 7.72  & 6.34 & 18.94      & 75.45  & 32.49 &  24.80    \\
    MiLoRA~\cite{milora}        & 320M & 45.98 & 11.94 & 6.74 & 21.55 & 85.86 &   32.21    &    26.34  \\
    CorDA~\cite{corda}         & 320M & 45.34 & \underline{14.50}  & \underline{7.28} & 22.37 & 89.13 &   \underline{34.45}    &    \underline{27.76}  \\
    LoRA-Null~\cite{loranull}     & 320M & \underline{47.63} & \textbf{14.68} & 6.88 & \underline{23.06} & \underline{91.87} & 32.61 &   27.42   \\
    OPLoRA~\cite{oplora}        & 320M & 46.90  & 10.19 & 7.08 & 21.39      & 85.21  & 28.65 &   24.75   \\
    \midrule
    \rowcolor{metablue!10}
    FoLoRA (ours) & 320M & \textbf{49.14} & 14.01 & \textbf{7.97} & \textbf{23.71} & \textbf{94.44} & \textbf{36.13} &  \textbf{29.26}    \\
    \bottomrule[1.5pt]
\end{tabular}
\end{subtable}
\caption{
Main results on \colorbox{metablue!10}{\textbf{LLaMA2-7B-Base}}. Avg1 averages preservation scores over TriviaQA, NQ Open, and WebQS; Avg1(\%) reports its percentage relative to the pretrained model. 
Avg2 averages downstream scores, while IFEVAL is used directly for instruction following. 
GM is the geometric mean of Avg1 and Avg2. 
Bold indicates the best result and underlining the runner-up. 
Results are averaged over three random seeds.}
\label{tab:llama2}

\end{table*}

\paragraph{Baselines.}
We compare FoLoRA with several forgetting-aware fine-tuning methods, including LoRA~\cite{lora}, MiLoRA~\cite{milora}, CorDA~\cite{corda}, LoRA-Null~\cite{loranull}, and OPLoRA~\cite{oplora}. All methods are implemented under the same training and evaluation protocol. We also report performance of the original LLaMA2-7B~\cite{llama2} for reference. Further training details are provided in Appendix~\ref{appen:experiments}. 

\subsection{Main Results}
\label{sec:main_result}
Across the three fine-tuning settings on LLaMA-2-7B, FoLoRA achieves the best overall preservation-adaptation balance among the evaluated methods, as shown in Tables~\ref{tab:llama2_math}, \ref{tab:llama2_code}, and \ref{tab:llama2_inst}. On math adaptation with MetaMathQA, FoLoRA obtains the highest preservation scores, with Avg1 = 26.06 and Avg1(\%) = 103.84, while also achieving the best downstream aggregate, with Avg2 = 26.34 and GM = 26.20. On code adaptation with CodeFeedback, FoLoRA again yields the strongest preservation, with Avg1 = 25.63 and Avg1(\%) = 102.12, and achieves the best aggregate performance, with Avg2 = 22.24 and GM = 23.87. On instruction following with WizardLM-Evol-Instruct, FoLoRA obtains the highest preservation score, with Avg1 = 23.71 and Avg1(\%) = 94.44, and the best combined score, GM = 29.26. Overall, FoLoRA is the only method that exceeds the pretrained model's average preservation score in both math and code adaptation while also achieving the strongest downstream performance, indicating that it improves adaptation without sacrificing pretrained knowledge.

\paragraph{Ablation on Spectral-gated Adam.}
\begin{figure*}[t]
  \centering
  \makebox[\textwidth][c]{%
    \begin{subfigure}[t]{0.37\textwidth}
      \centering
      \includegraphics[width=\linewidth]{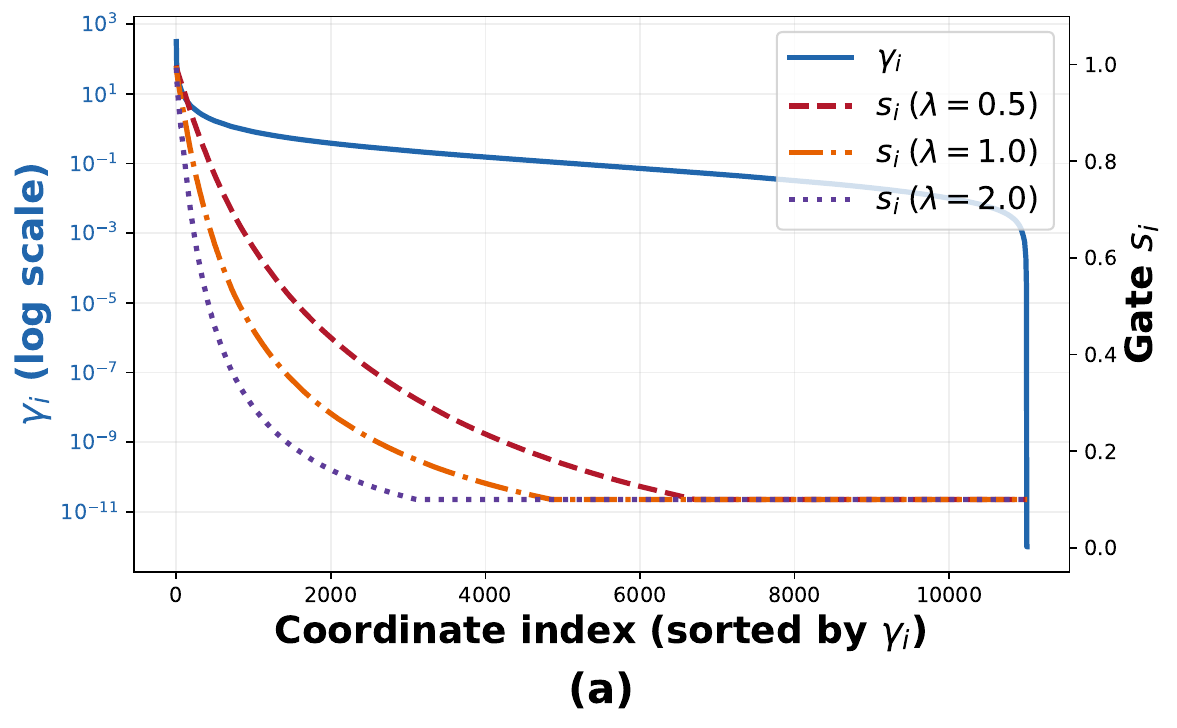}
    \end{subfigure}%
    \begin{subfigure}[t]{0.4\textwidth}
      \centering
      \includegraphics[width=\linewidth]{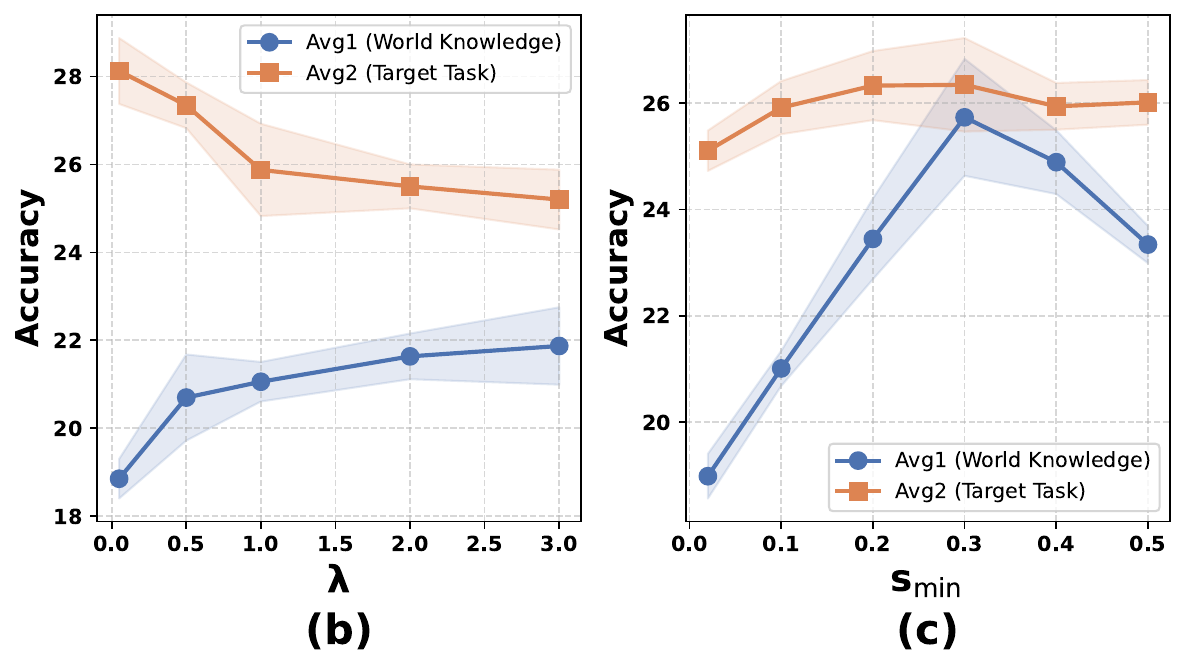}
    \end{subfigure}
    \begin{subfigure}[t]{0.22\textwidth}
      \centering
      \includegraphics[width=\linewidth]{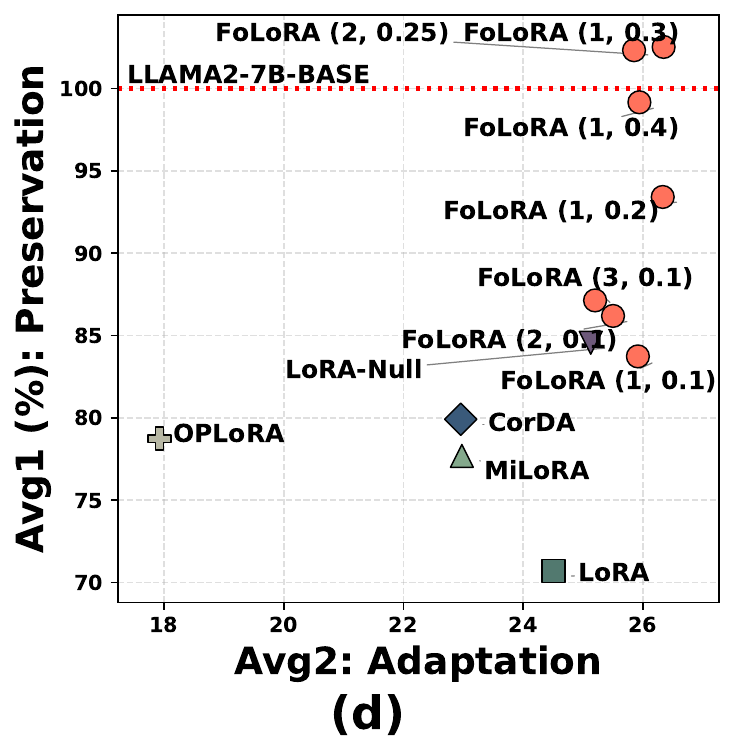}
    \end{subfigure}

  }
  \caption{Ablation of spectral-gated Adam on LLaMA-2-7B fine-tuned on MetaMathQA.
    (a) Distribution of generalized eigenvalues $\{\gamma_i\}$ and corresponding gate values $\{s_i\}$ for different $\lambda$ (Layer-15 MLP downproj).
    (b) Preservation score Avg1 and downstream adaptation score Avg2 as $\lambda$ varies.
    (c) Avg1 and Avg2 as $s_{\min}$ varies.
    (d) Preservation-adaptation trade-off of FoLoRA across operating points $(\lambda,s_{\min})$, compared with baselines.
}
  \label{fig:folora_ablation}
\end{figure*}

\begin{table*}[t]
\centering
\small


\resizebox{\textwidth}{!}{%
\begin{tabular}{l|cccccccccc|c|c}
    \toprule[1.5pt]

    Method & IFEVAL & TruthfulQA & SQuAD & TriviaQA & NQopen & MMLU & BLong & LBench & GPQA & Avg1 & Math & GM \\
    \midrule
    Qwen3-1.7B & 20.51 & 48.76 & 27.19 & 28.78 & 8.81 & 60.61 & 64.54 & 29.42 & 31.31 & 35.54 & - & - \\
    \midrule
    LoRA      & 19.59 & 47.93 & 25.79 & 29.62 &6.06 & \textbf{60.12} & 65.74 & 29.22 & 27.77 & 34.64 & 42.74 & 38.48 \\
    MiLoRA      & 23.10 & 48.06 & 23.07 & 29.69 &5.40 & 59.90 & 65.98 & 28.03 & 29.79 & 34.78 & 43.98 & 39.11 \\
    OPLoRA  & 19.77 & 48.06 & 25.52 & 29.59 &\textbf{7.97} & 59.94 &  65.23 & 29.02 & 29.29 & 34.93 & 43.02 & 38.76   \\
    \midrule
    CorDA       & 18.66 & 48.30 & 23.93 & 16.58 &5.73& 56.84 & 65.57 & 27.83 & 29.79 & 32.58 & 42.64 & 37.27 \\
    LoRA-Null  & 23.29 & \underline{48.40} & 23.53 & 30.53 &5.37 & 59.96 & 66.07 & 28.03 & 27.77 & 34.77 & 43.86 & 39.05 \\
    CorDA (4B)      &19.03 &48.35&23.76&19.85&7.50&57.11&66.06&27.83&27.27&32.97&42.74&37.54 \\
    LoRA-Null (4B)  &21.96 &48.16&25.84&30.58&7.45&60.07&65.52&\underline{29.42}&28.78 &35.30&43.12&39.01 \\
    \midrule
    \rowcolor{gray!8}
    FoLoRA (NQ) &\underline{23.29} &48.32&\underline{27.56}&\underline{32.41}&\underline{7.73}&\underline{60.11}&\underline{66.34} &29.02&\underline{31.81}&\underline{36.28}&\textbf{44.28}&\underline{40.08} \\
    \rowcolor{gray!8}
    FoLoRA (0.6B) &23.10& 48.22& 25.53& 32.18& 6.34& 59.86& 66.09& 28.03& 31.31& 35.62& 43.62& 39.42\\
    \rowcolor{gray!8}
    FoLoRA (1.7B) & 22.50& 48.36& \textbf{27.98}& \textbf{32.46} &7.00 & 59.97& \textbf{66.50}& 28.82& 30.30& 35.98& 43.52& 39.57\\
    \rowcolor{metablue!10}
    FoLoRA (4B,ours) & \textbf{23.84} & \textbf{48.46} & 27.07 & 32.07 &7.25 & 60.06 & 66.17 & \textbf{29.62} & \textbf{34.34} & \textbf{36.54} & \underline{44.02} & \textbf{40.10} \\
    
    \bottomrule[1.5pt]
\end{tabular}
}
\caption{Results on \colorbox{metablue!10}{\textbf{Qwen3-1.7B-Base}} after fine-tuning on MetaMathQA, evaluated on preservation benchmarks and a downstream math task. We also report CorDA and LoRA-Null using the model-generated pretraining-proxy calibration set, and FoLoRA calibrated on NQ Open. Bold denotes the best result, the runner-up is underlined.}
\label{tab:qwen_math2}
\end{table*}

Fig.~\ref{fig:folora_ablation}(a) visualizes the distribution of generalized eigenvalues $\gamma_i$ and the corresponding gating values $s_i$ under different choices of $\lambda$. We observe that a small number of leading eigenvalues account for most of the spectral mass, indicating large variation in utility-to-penalty ratios across the generalized $H$-orthogonal directions. Directions with high task utility per unit forgetting penalty have large Rayleigh quotients and therefore large $\gamma_i$, for which the gating values $s_i$ approach $1$, allowing nearly full Adam updates along these directions. In contrast, directions with low utility-to-penalty ratios are characterized by small $\gamma_i$ and assigned smaller gating values, which effectively suppresses updates that are less favorable for preservation-aware adaptation. The gating profile is controlled by two hyperparameters, $\lambda$ and $s_{\mathrm{min}}$: $s_{\mathrm{min}}$ sets the minimum gating values, while $\lambda$ controls how sharply the gate separates high-ratio and low-ratio directions. A smaller $s_{\mathrm{min}}$ or larger $\lambda$ makes the update more conservative by more strongly attenuating low-ratio directions, whereas a larger $s_{\mathrm{min}}$ or smaller $\lambda$ allow a more exploratory adaptation strategy. 

Fig.~\ref{fig:folora_ablation}(b) reports Avg1, which measures knowledge preservation, and Avg2, which measures downstream-task performance, as a function of $\lambda$. As $\lambda$ increases, Avg1 rises consistently, whereas Avg2 decreases gradually. This trend indicates that larger $\lambda$ makes the update rule more conservative by focusing optimization on directions with the largest utility-to-penalty ratios, thus improving preservation at the cost of reduced adaptation capacity. Fig.~\ref{fig:folora_ablation}(c) shows the effect of varying $s_{\mathrm{min}}$. As $s_{\mathrm{min}}$ increases, Avg2 modestly improves, since a larger minimum gate allows more update magnitude along directions that would otherwise be strongly suppressed. Avg1, however, follows a non-monotonic pattern. A moderate increase in $s_{\mathrm{min}}$ can be beneficial for preservation, suggesting that overly aggressive suppression of low $\gamma_i$ directions may restrict optimization too strongly. However, once $s_{\mathrm{min}}$ becomes too permissive, aggressive updates along directions with larger forgetting penalties start to degrade pretrained knowledge. Fig.~\ref{fig:folora_ablation}(d) jointly illustrates the preservation-adaptation performance  on various operating points obtained by varying $(\lambda, s_\mathrm{min})$, together with the baseline methods. FoLoRA traces a favorable set of preservation-adaptation operating points, indicating that spectral gating can adjust the balance between downstream adaptation and preservation of non-target capabilities.

\begin{figure*}[t]
  \centering
  \makebox[\textwidth][c]{%
    \begin{subfigure}[t]{0.29\textwidth}
      \centering
      \includegraphics[width=\linewidth]{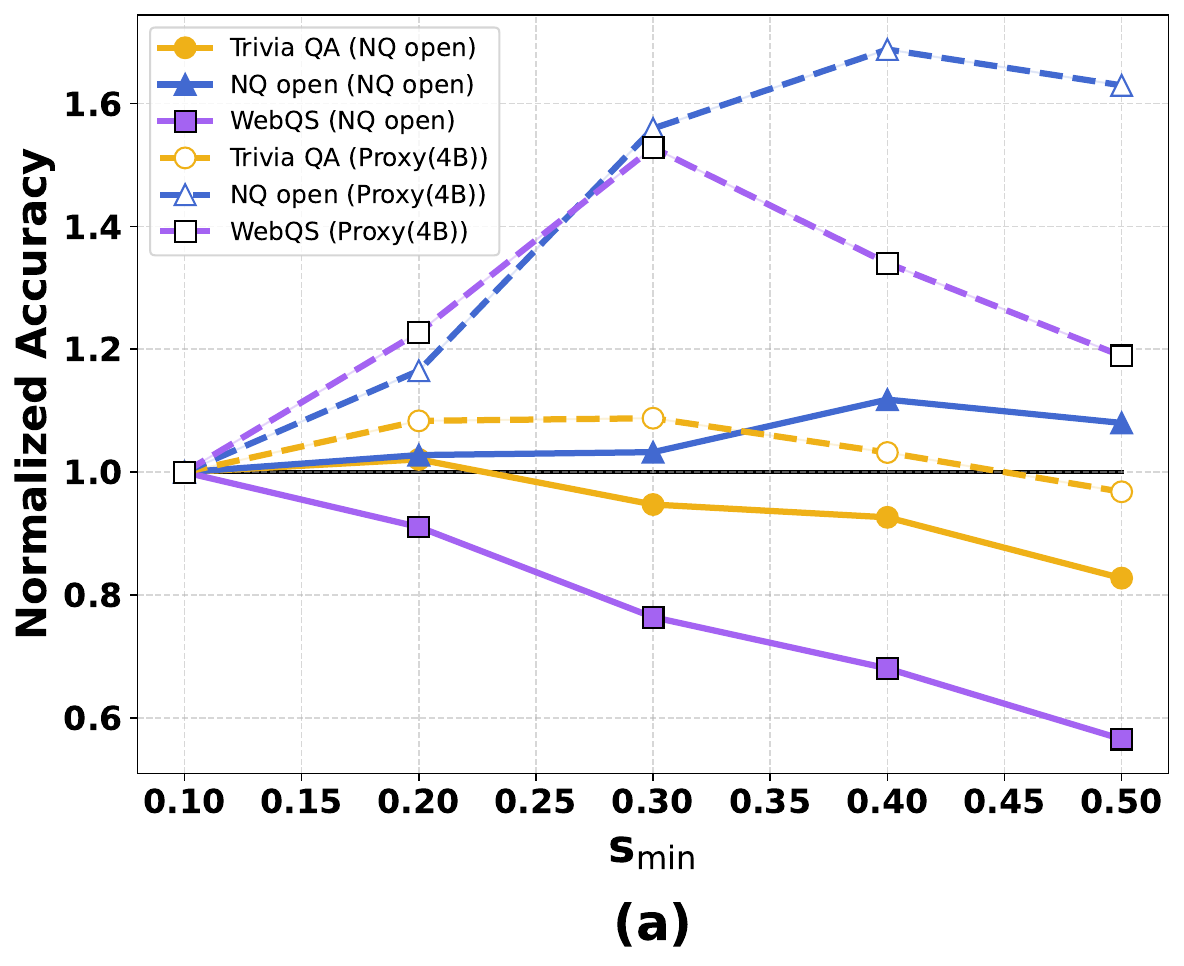}
    \end{subfigure}
    \begin{subfigure}[t]{0.3\textwidth}
      \centering
      \includegraphics[width=\linewidth]{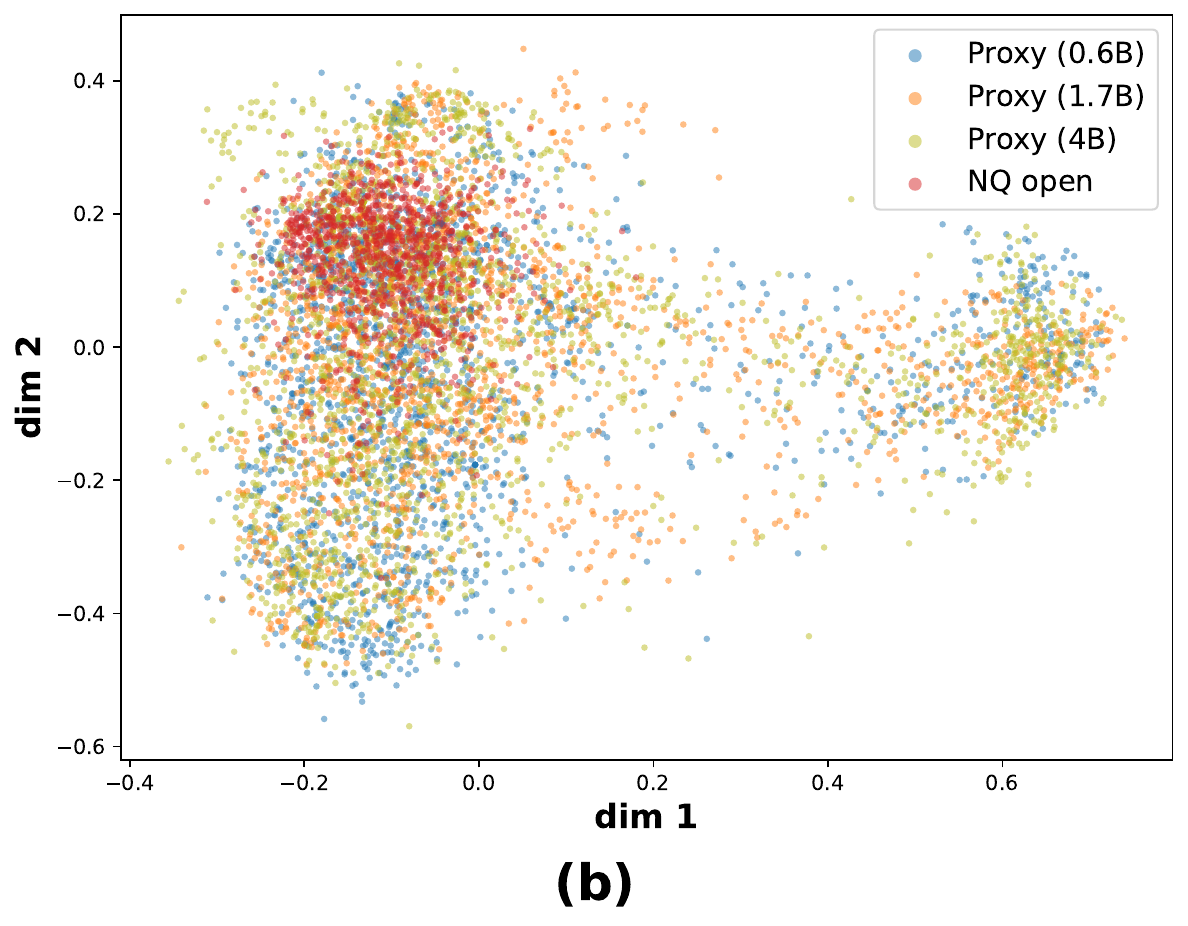}
    \end{subfigure}%
    \begin{subfigure}[t]{0.4\textwidth}
      \centering
      \includegraphics[width=\linewidth]{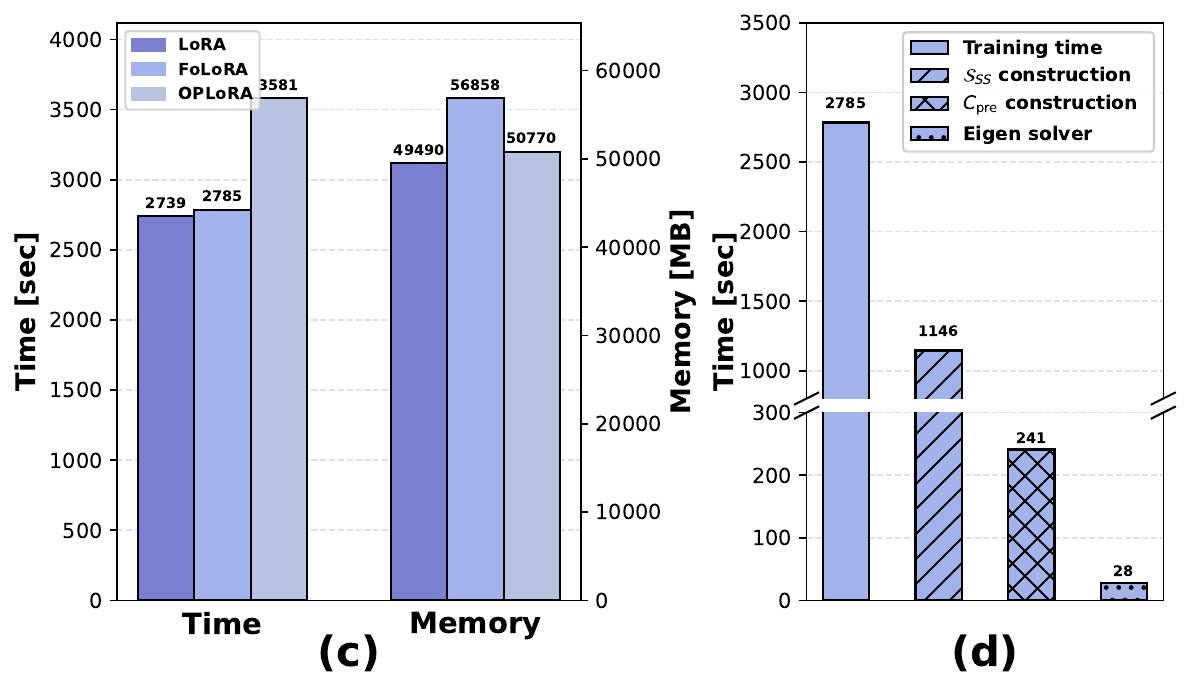}
    \end{subfigure}%
  }
\caption{
(a) Normalized accuracy on preservation benchmarks as a function of $s_{\min}$ (LLaMA-2-7B, math adaptation);
(b) PCA visualization of NQ Open queries and model-generated calibration sets produced by Qwen3-0.6B, 1.7B, and 4B; (c) training time and peak VRAM usage (Qwen3-1.7B, math adaptation);
(d) non-training overhead components of FoLoRA, including model-generated calibration construction, $C_{\mathrm{pre}}$ construction, and generalized eigensolver cost (Qwen3-1.7B, math adaptation; Qwen3-4B used for calibration generation).
}
 
\label{fig:lastfig}
\end{figure*}

\paragraph{Evaluation on Broader Preservation Benchmarks.} Beyond factual retrieval benchmarks such as TriviaQA~\cite{triviaqa}, we further evaluate FoLoRA on a broader set of preservation benchmarks, as shown in Table~\ref{tab:qwen_math2}. Specifically, we consider instruction following (IFEVAL~\cite{ifeval}), hallucination robustness (TruthfulQA~\cite{truthfulqa}), reading comprehension (SQuAD-v2~\cite{squad}), multitask knowledge and reasoning (MMLU~\cite{mmlu}), long-context modeling (BABILong~\cite{babilong}, LongBench-v2~\cite{longbenchv2}), and scientific reasoning (GPQA~\cite{gpqa}). All experiments are conducted on Qwen3-1.7B with MetaMathQA fine-tuning~\cite{metamath}. When comparing the base model with its LoRA-fine-tuned counterpart, We observe more severe degradation on tasks that require factual retrieval, particularly NQ Open, GPQA, and MMLU. Instruction-following and hallucination robustness degrade only mildly, as reflected by IFEVAL and TruthfulQA; Interestingly, long-context performance, evaluated on LongBench-v2 and BABILong, remains largely preserved. These results indicate that fine-tuning does not affect all non-target capabilities uniformly, making broad preservation evaluation important. FoLoRA, with the default model-generated calibration setting denoted FoLoRA (4B), achieves the strongest overall preservation across these benchmarks while maintaining competitive downstream performance.

\paragraph{Overhead Analysis.} FoLoRA adds modest overhead: in the Qwen3-1.7B math-adaptation setting, fine-tuning time increases by 1.6\% and peak VRAM by 14.8\% over LoRA which serves as a lower-bound (Fig.~\ref{fig:lastfig}(c)). In addition, the one-time preprocessing costs for calibration generation, covariance construction, and generalized eigendecomposition remain lightweight, as shown in Fig.~\ref{fig:lastfig}(d).

\subsection{Effectiveness of Model-generated Calibration Set}
In Table~\ref{tab:qwen_math2}, we evaluate the effect of the calibration source by reporting performance of CorDA and LoRA-Null with our model-generated calibration set $\mathcal{S}_{\mathrm{MG}}$, and FoLoRA with NQ Open as the calibration set. We construct $\mathcal{S}_{\mathrm{MG}}$ using generators of varying sizes; e.g., FoLoRA (4B) and CorDA (4B) use Qwen3-4B generations. The results show that replacing NQ Open with $\mathcal{S}_{\mathrm{MG}}$ improves the preservation performance of CorDA and LoRA-Null, suggesting that model-generated calibration provides activation statistics that cover a broader range of pretrained behaviors than a single dataset. Conversely, when FoLoRA uses NQ Open as the calibration set, its preservation effect is concentrated on factual-retrieval benchmarks such as TriviaQA and NQ Open, while performance on several other domains is lower than with $\mathcal{S}_{\mathrm{MG}}$. These results show that the calibration source matters: model-generated calibration improves preservation across domains without sacrificing adaptation.

Using larger Qwen3 models to generate $\mathcal{S}_{\mathrm{MG}}$ generally improves FoLoRA’s aggregate preservation. Since these models are trained on the same pretraining corpus~\cite{qwen_report}, the improvement suggests that larger models generate calibration data whose induced activation statistics better reflect the pretraining distribution. The trend is consistent with the approximation-rate discussion in Appendix~\ref{appen:kl-approx-rate}, where increased model capacity reduces the approximation gap under suitable assumptions.

To further analyze calibration coverage, we vary FoLoRA’s minimum spectral gate $s_{\mathrm{min}}$, which controls forgetting pressure, and report normalized accuracy (relative to $s_{\mathrm{min}}=0.1)$ on TriviaQA, NQ Open, and WebQS on Fig.~\ref{fig:lastfig}(a). As $s_{\mathrm{min}}$ increases, FoLoRA calibrated on NQ Open primarily preserves performance on NQ Open, while other benchmarks degrade. In contrast, $\mathcal{S}_{\mathrm{MG}}$ calibration preserves performance more consistently across all three benchmarks, even under stronger forgetting pressure. These results indicate that a single calibration benchmark can bias preservation, whereas model-generated calibration provides broader protection coverage. Figure~\ref{fig:lastfig}(b) visualizes the distributional coverage of NQ Open and the model-generated calibration sets generated from Qwen3 0.6B, 1.7B, and 4B using principal component analysis (PCA)~\cite{pca}. The model-generated proxies exhibit broader coverage and encompass much of the distributional support of NQ Open.

\section{Conclusion, Limitations, and Future Directions}

We propose FoLoRA, a forgetting-aware optimizer that balances adaptation and retention. FoLoRA uses a generalized Rayleigh quotient to score update directions by task utility per unit forgetting penalty, estimated using a model-generated calibration set, and then applies direction-wise gated Adam updates. Experiments show improved preservation–adaptation trade-offs across tasks and benchmarks. Although instantiated with LoRA~\cite{lora}, FoLoRA is architecture-agnostic and can be extended to full-weight fine-tuning, as discussed in Appendix~\ref{app:beyond_lora}. We leave a comprehensive empirical evaluation of this extension to future work. A remaining limitation is the additional VRAM cost during training. While modest, investigating memory efficiency of FoLoRA would be valuable.

\section{Acknowledgments}
The authors acknowledge the computing resources made available by the Vista GPU Cluster, operated through the Center for Generative AI (CGAI) and the Texas Advanced Computing Center (TACC) at the University of Texas at Austin, which supported this research.

\bibliography{CITE}

@String(ICLR = {Int. Conf. Learn. Represent.})

@String(AAAI = {AAAI})

@String(ICLR  = {ICLR})

@inproceedings{adam,
  author       = {Diederik P. Kingma and
                  Jimmy Ba},
  editor       = {Yoshua Bengio and
                  Yann LeCun},
  title        = {Adam: {A} Method for Stochastic Optimization},
  booktitle    = {3rd International Conference on Learning Representations, {ICLR} 2015,
                  San Diego, CA, USA, May 7-9, 2015, Conference Track Proceedings},
  year         = {2015},
  url          = {http://arxiv.org/abs/1412.6980},
  bibsource    = {dblp computer science bibliography, https://dblp.org}
}

@article{llm1,
  title={Language Models are Few-Shot Learners},
  author={Tom B. Brown and Benjamin Mann and Nick Ryder and Melanie Subbiah and Jared Kaplan and Prafulla Dhariwal and Arvind Neelakantan and Pranav Shyam and Girish Sastry and Amanda Askell and Sandhini Agarwal and Ariel Herbert-Voss and Gretchen Krueger and Thomas Henighan and Rewon Child and Aditya Ramesh and Daniel M. Ziegler and Jeff Wu and Clemens Winter and Christopher Hesse and Mark Chen and Eric Sigler and Ma-teusz Litwin and Scott Gray and Benjamin Chess and Jack Clark and Christopher Berner and Sam McCandlish and Alec Radford and Ilya Sutskever and Dario Amodei},
  journal={ArXiv},
  year={2020},
  volume={abs/2005.14165},
  url={https://api.semanticscholar.org/CorpusID:218971783}
}

@article{llm2,
  title={Training language models to follow instructions with human feedback},
  author={Long Ouyang and Jeff Wu and Xu Jiang and Diogo Almeida and Carroll L. Wainwright and Pamela Mishkin and Chong Zhang and Sandhini Agarwal and Katarina Slama and Alex Ray and John Schulman and Jacob Hilton and Fraser Kelton and Luke E. Miller and Maddie Simens and Amanda Askell and Peter Welinder and Paul Francis Christiano and Jan Leike and Ryan J. Lowe},
  journal={ArXiv},
  year={2022},
  volume={abs/2203.02155},
  url={https://api.semanticscholar.org/CorpusID:246426909}
}

@inproceedings{qa1,
  title={Improving Language Understanding by Generative Pre-Training},
  author={Alec Radford and Karthik Narasimhan},
  year={2018},
  url={https://api.semanticscholar.org/CorpusID:49313245}
}

@inproceedings{commonsense,
  title={COMET: Commonsense Transformers for Automatic Knowledge Graph Construction},
  author={Antoine Bosselut and Hannah Rashkin and Maarten Sap and Chaitanya Malaviya and Asli Celikyilmaz and Yejin Choi},
  booktitle={Annual Meeting of the Association for Computational Linguistics},
  year={2019},
  url={https://api.semanticscholar.org/CorpusID:189762527}
}

@article{instfollowing,
  title={Fine-Tuning Language Models from Human Preferences},
  author={Daniel M. Ziegler and Nisan Stiennon and Jeff Wu and Tom B. Brown and Alec Radford and Dario Amodei and Paul Christiano and Geoffrey Irving},
  journal={ArXiv},
  year={2019},
  volume={abs/1909.08593},
  url={https://api.semanticscholar.org/CorpusID:202660943}
}

@article{forget2,
  title={Fine-Tuning is Fine, if Calibrated},
  author={Zheda Mai and Arpita Chowdhury and Ping Zhang and Cheng-Hao Tu and Hong-You Chen and Vardaan Pahuja and Tanya Y. Berger-Wolf and Song Gao and Charles V. Stewart and Yu Su and Wei-Lun Chao},
  journal={ArXiv},
  year={2024},
  volume={abs/2409.16223},
  url={https://api.semanticscholar.org/CorpusID:272831906}
}

@article{forget3,
  title={Fine-tuning Aligned Language Models Compromises Safety, Even When Users Do Not Intend To!},
  author={Xiangyu Qi and Yi Zeng and Tinghao Xie and Pin-Yu Chen and Ruoxi Jia and Prateek Mittal and Peter Henderson},
  journal={ArXiv},
  year={2023},
  volume={abs/2310.03693},
  url={https://api.semanticscholar.org/CorpusID:263671523}
}

@inproceedings{stm,
  title={Mitigating Forgetting in LLM Fine-Tuning via Low-Perplexity Token Learning},
  author={Chao-Chung Wu and Zhi Rui Tam and Chieh-Yen Lin and Hung-yi Lee and Yun-Nung Chen},
  year={2025},
  url={https://api.semanticscholar.org/CorpusID:275906908}
}

@inproceedings{milora,
  title={MiLoRA: Harnessing Minor Singular Components for Parameter-Efficient LLM Finetuning},
  author={Hanqing Wang and Zeguan Xiao and Yixia Li and Shuo Wang and Guanhua Chen and Yun Chen},
  booktitle={North American Chapter of the Association for Computational Linguistics},
  year={2024},
  url={https://api.semanticscholar.org/CorpusID:270440848}
}

@article{corda,
  title={CorDA: Context-Oriented Decomposition Adaptation of Large Language Models},
  author={Yibo Yang and Xiaojie Li and Zhongzhu Zhou and Shuaiwen Leon Song and Jianlong Wu and Liqiang Nie and Bernard Ghanem},
  journal={ArXiv},
  year={2024},
  volume={abs/2406.05223},
  url={https://api.semanticscholar.org/CorpusID:270370784}
}

@inproceedings{loranull,
  title={Put the Space of LoRA Initialization to the Extreme to Preserve Pre-trained Knowledge},
  author={Pengwei Tang and Xiaolin Hu and Yong Liu and Lizhong Ding and Dongjie Zhang and Xing Wu and Debing Zhang},
  booktitle={AAAI Conference on Artificial Intelligence},
  year={2025},
  url={https://api.semanticscholar.org/CorpusID:276768395}
}

@inproceedings{oplora,
  title={OPLoRA: Orthogonal Projection LoRA Prevents Catastrophic Forgetting during Parameter-Efficient Fine-Tuning},
  author={Yifeng Xiong and Xiaohui Xie},
  booktitle={AAAI Conference on Artificial Intelligence},
  year={2025},
  url={https://api.semanticscholar.org/CorpusID:282102731}
}

@article{nqopen,
  title={Natural Questions: A Benchmark for Question Answering Research},
  author={Tom Kwiatkowski and Jennimaria Palomaki and Olivia Redfield and Michael Collins and Ankur P. Parikh and Chris Alberti and Danielle Epstein and Illia Polosukhin and Jacob Devlin and Kenton Lee and Kristina Toutanova and Llion Jones and Matthew Kelcey and Ming-Wei Chang and Andrew M. Dai and Jakob Uszkoreit and Quoc V. Le and Slav Petrov},
  journal={Transactions of the Association for Computational Linguistics},
  year={2019},
  volume={7},
  pages={453-466},
  url={https://api.semanticscholar.org/CorpusID:86611921}
}

@article{triviaqa,
  title={TriviaQA: A Large Scale Distantly Supervised Challenge Dataset for Reading Comprehension},
  author={Mandar Joshi and Eunsol Choi and Daniel S. Weld and Luke Zettlemoyer},
  journal={ArXiv},
  year={2017},
  volume={abs/1705.03551},
  url={https://api.semanticscholar.org/CorpusID:26501419}
}

@inproceedings{webqs,
  author       = {Jonathan Berant and
                  Andrew Chou and
                  Roy Frostig and
                  Percy Liang},
  title        = {Semantic Parsing on Freebase from Question-Answer Pairs},
  booktitle    = {Proceedings of the 2013 Conference on Empirical Methods in Natural
                  Language Processing, {EMNLP} 2013, 18-21 October 2013, Grand Hyatt
                  Seattle, Seattle, Washington, USA, {A} meeting of SIGDAT, a Special
                  Interest Group of the {ACL}},
  pages        = {1533--1544},
  publisher    = {{ACL}},
  year         = {2013},
  url          = {https://doi.org/10.18653/v1/d13-1160},
  doi          = {10.18653/V1/D13-1160},
  bibsource    = {dblp computer science bibliography, https://dblp.org}
}

@article{metamath,
  title={Metamath: Bootstrap your own mathematical questions for large language models},
  author={Yu, Longhui and Jiang, Weisen and Shi, Han and Yu, Jincheng and Liu, Zhengying and Zhang, Yu and Kwok, James T and Li, Zhenguo and Weller, Adrian and Liu, Weiyang},
  journal={arXiv preprint arXiv:2309.12284},
  year={2023}
}

@article{gsm8k,
  author       = {Karl Cobbe and
                  Vineet Kosaraju and
                  Mohammad Bavarian and
                  Mark Chen and
                  Heewoo Jun and
                  Lukasz Kaiser and
                  Matthias Plappert and
                  Jerry Tworek and
                  Jacob Hilton and
                  Reiichiro Nakano and
                  Christopher Hesse and
                  John Schulman},
  title        = {Training Verifiers to Solve Math Word Problems},
  journal      = {CoRR},
  volume       = {abs/2110.14168},
  year         = {2021},
  url          = {https://arxiv.org/abs/2110.14168},
  eprinttype   = {arXiv},
  eprint       = {2110.14168},
  bibsource    = {dblp computer science bibliography, https://dblp.org}
}

@article{math,
  title={Measuring mathematical problem solving with the math dataset},
  author={Hendrycks, Dan and Burns, Collin and Kadavath, Saurav and Arora, Akul and Basart, Steven and Tang, Eric and Song, Dawn and Steinhardt, Jacob},
  journal={arXiv preprint arXiv:2103.03874},
  year={2021}
}

@article{humaneval,
  title={Evaluating Large Language Models Trained on Code},
  author={Mark Chen and Jerry Tworek and Heewoo Jun and Qiming Yuan and Henrique Pond{\'e} and Jared Kaplan and Harrison Edwards and Yura Burda and Nicholas Joseph and Greg Brockman and Alex Ray and Raul Puri and Gretchen Krueger and Michael Petrov and Heidy Khlaaf and Girish Sastry and Pamela Mishkin and Brooke Chan and Scott Gray and Nick Ryder and Mikhail Pavlov and Alethea Power and Lukasz Kaiser and Mo Bavarian and Clemens Winter and Phil Tillet and Felipe Petroski Such and David W. Cummings and Matthias Plappert and Fotios Chantzis and Elizabeth Barnes and Ariel Herbert-Voss and William H. Guss and Alex Nichol and Igor Babuschkin and Suchir Balaji and Shantanu Jain and Andrew Carr and Jan Leike and Josh Achiam and Vedant Misra and Evan Morikawa and Alec Radford and Matthew M. Knight and Miles Brundage and Mira Murati and Katie Mayer and Peter Welinder and Bob McGrew and Dario Amodei and Sam McCandlish and Ilya Sutskever and Wojciech Zaremba},
  journal={ArXiv},
  year={2021},
  volume={abs/2107.03374},
  url={https://api.semanticscholar.org/CorpusID:235755472}
}

@article{mbpp,
  title={Program synthesis with large language models},
  author={Austin, Jacob and Odena, Augustus and Nye, Maxwell and Bosma, Maarten and Michalewski, Henryk and Dohan, David and Jiang, Ellen and Cai, Carrie and Terry, Michael and Le, Quoc and others},
  journal={arXiv preprint arXiv:2108.07732},
  year={2021}
}

@inproceedings{codefeedback,
  title={Opencodeinterpreter: Integrating code generation with execution and refinement},
  author={Zheng, Tianyu and Zhang, Ge and Shen, Tianhao and Liu, Xueling and Lin, Bill Yuchen and Fu, Jie and Chen, Wenhu and Yue, Xiang},
  booktitle={Findings of the Association for Computational Linguistics: ACL 2024},
  pages={12834--12859},
  year={2024}
}

@inproceedings{wizard,
  title={WizardLM: Empowering Large Pre-Trained Language Models to Follow Complex Instructions},
  author={Can Xu and Qingfeng Sun and Kai Zheng and Xiubo Geng and Pu Zhao and Jiazhan Feng and Chongyang Tao and Daxin Jiang},
  booktitle={International Conference on Learning Representations},
  year={2023},
  url={https://api.semanticscholar.org/CorpusID:258298159}
}

@article{ifeval,
  title={Instruction-following evaluation for large language models},
  author={Zhou, Jeffrey and Lu, Tianjian and Mishra, Swaroop and Brahma, Siddhartha and Basu, Sujoy and Luan, Yi and Zhou, Denny and Hou, Le},
  journal={arXiv preprint arXiv:2311.07911},
  year={2023}
}

@article{lora,
  title={Lora: Low-rank adaptation of large language models.},
  author={Hu, Edward J and Shen, Yelong and Wallis, Phillip and Allen-Zhu, Zeyuan and Li, Yuanzhi and Wang, Shean and Wang, Liang and Chen, Weizhu and others},
  journal={Iclr},
  volume={1},
  number={2},
  pages={3},
  year={2022}
}

@article{llama2,
  title={Llama 2: Open foundation and fine-tuned chat models},
  author={Touvron, Hugo and Martin, Louis and Stone, Kevin and Albert, Peter and Almahairi, Amjad and Babaei, Yasmine and Bashlykov, Nikolay and Batra, Soumya and Bhargava, Prajjwal and Bhosale, Shruti and others},
  journal={arXiv preprint arXiv:2307.09288},
  year={2023}
}

@inproceedings{preference,
  author       = {Rafael Rafailov and
                  Archit Sharma and
                  Eric Mitchell and
                  Christopher D. Manning and
                  Stefano Ermon and
                  Chelsea Finn},
  editor       = {Alice Oh and
                  Tristan Naumann and
                  Amir Globerson and
                  Kate Saenko and
                  Moritz Hardt and
                  Sergey Levine},
  title        = {Direct Preference Optimization: Your Language Model is Secretly a
                  Reward Model},
  booktitle    = {Advances in Neural Information Processing Systems 36: Annual Conference
                  on Neural Information Processing Systems 2023, NeurIPS 2023, New Orleans,
                  LA, USA, December 10 - 16, 2023},
  year         = {2023},
  url          = {http://papers.nips.cc/paper\_files/paper/2023/hash/a85b405ed65c6477a4fe8302b5e06ce7-Abstract-Conference.html},
  bibsource    = {dblp computer science bibliography, https://dblp.org}
}

@inproceedings{squad,
  title={Squad: 100,000+ questions for machine comprehension of text},
  author={Rajpurkar, Pranav and Zhang, Jian and Lopyrev, Konstantin and Liang, Percy},
  booktitle={Proceedings of the 2016 conference on empirical methods in natural language processing},
  pages={2383--2392},
  year={2016}
}

@inproceedings{mmlu,
  author       = {Dan Hendrycks and
                  Collin Burns and
                  Steven Basart and
                  Andy Zou and
                  Mantas Mazeika and
                  Dawn Song and
                  Jacob Steinhardt},
  title        = {Measuring Massive Multitask Language Understanding},
  booktitle    = {9th International Conference on Learning Representations, {ICLR} 2021,
                  Virtual Event, Austria, May 3-7, 2021},
  publisher    = {OpenReview.net},
  year         = {2021},
  url          = {https://openreview.net/forum?id=d7KBjmI3GmQ},
  bibsource    = {dblp computer science bibliography, https://dblp.org}
}

@inproceedings{truthfulqa,
  title={Truthfulqa: Measuring how models mimic human falsehoods},
  author={Lin, Stephanie and Hilton, Jacob and Evans, Owain},
  booktitle={Proceedings of the 60th annual meeting of the association for computational linguistics (volume 1: long papers)},
  pages={3214--3252},
  year={2022}
}

@article{babilong,
  title={Babilong: Testing the limits of llms with long context reasoning-in-a-haystack},
  author={Kuratov, Yuri and Bulatov, Aydar and Anokhin, Petr and Rodkin, Ivan and Sorokin, Dmitry and Sorokin, Artyom and Burtsev, Mikhail},
  journal={Advances in Neural Information Processing Systems},
  volume={37},
  pages={106519--106554},
  year={2024}
}

@inproceedings{longbenchv2,
  title={Longbench v2: Towards deeper understanding and reasoning on realistic long-context multitasks},
  author={Bai, Yushi and Tu, Shangqing and Zhang, Jiajie and Peng, Hao and Wang, Xiaozhi and Lv, Xin and Cao, Shulin and Xu, Jiazheng and Hou, Lei and Dong, Yuxiao and others},
  booktitle={Proceedings of the 63rd Annual Meeting of the Association for Computational Linguistics (Volume 1: Long Papers)},
  pages={3639--3664},
  year={2025}
}

@article{gpqa,
  author       = {David Rein and
                  Betty Li Hou and
                  Asa Cooper Stickland and
                  Jackson Petty and
                  Richard Yuanzhe Pang and
                  Julien Dirani and
                  Julian Michael and
                  Samuel R. Bowman},
  title        = {{GPQA:} {A} Graduate-Level Google-Proof Q{\&}A Benchmark},
  journal      = {CoRR},
  volume       = {abs/2311.12022},
  year         = {2023},
  url          = {https://doi.org/10.48550/arXiv.2311.12022},
  doi          = {10.48550/ARXIV.2311.12022},
  eprinttype   = {arXiv},
  eprint       = {2311.12022},
  bibsource    = {dblp computer science bibliography, https://dblp.org}
}

@article{qwen_report,
  author       = {Qwen Team},
  title        = {Qwen3 Technical Report},
  journal      = {CoRR},
  volume       = {abs/2505.09388},
  year         = {2025},
  url          = {https://doi.org/10.48550/arXiv.2505.09388},
  doi          = {10.48550/ARXIV.2505.09388},
  eprinttype   = {arXiv},
  eprint       = {2505.09388},
  bibsource    = {dblp computer science bibliography, https://dblp.org}
}

@article{pca,
  title={Principal Component Analysis},
  author={Ian T. Jolliffe},
  journal={Technometrics},
  year={2003},
  volume={45},
  pages={276 - 276},
  url={https://api.semanticscholar.org/CorpusID:2534141}
}

@inproceedings{jiang2024approximation,
  author       = {Haotian Jiang and
                  Qianxiao Li},
  editor       = {Amir Globersons and
                  Lester Mackey and
                  Danielle Belgrave and
                  Angela Fan and
                  Ulrich Paquet and
                  Jakub M. Tomczak and
                  Cheng Zhang},
  title        = {Approximation Rate of the Transformer Architecture for Sequence Modeling},
  booktitle    = {Advances in Neural Information Processing Systems 38: Annual Conference
                  on Neural Information Processing Systems 2024, NeurIPS 2024, Vancouver,
                  BC, Canada, December 10 - 15, 2024},
  year         = {2024},
  url          = {http://papers.nips.cc/paper\_files/paper/2024/hash/7f64034009f4a5fa417a57e1a987c5cd-Abstract-Conference.html},
  bibsource    = {dblp computer science bibliography, https://dblp.org}
}

@inproceedings{forgetting5,
  author       = {Ananya Kumar and
                  Aditi Raghunathan and
                  Robbie Matthew Jones and
                  Tengyu Ma and
                  Percy Liang},
  title        = {Fine-Tuning can Distort Pretrained Features and Underperform Out-of-Distribution},
  booktitle    = {The Tenth International Conference on Learning Representations, {ICLR}
                  2022, Virtual Event, April 25-29, 2022},
  publisher    = {OpenReview.net},
  year         = {2022},
  url          = {https://openreview.net/forum?id=UYneFzXSJWh},
  bibsource    = {dblp computer science bibliography, https://dblp.org}
}

@article{lora_forgetting,
  author       = {Dan Biderman and
                  Jacob P. Portes and
                  Jose Javier Gonzalez Ortiz and
                  Mansheej Paul and
                  Philip Greengard and
                  Connor Jennings and
                  Daniel King and
                  Sam Havens and
                  Vitaliy Chiley and
                  Jonathan Frankle and
                  Cody Blakeney and
                  John Patrick Cunningham},
  title        = {LoRA Learns Less and Forgets Less},
  journal      = {Trans. Mach. Learn. Res.},
  volume       = {2024},
  year         = {2024},
  url          = {https://openreview.net/forum?id=aloEru2qCG},
  bibsource    = {dblp computer science bibliography, https://dblp.org}
}

@inproceedings{grc,
  title={Rayleigh Quotient Based Optimization Methods For Eigenvalue Problems},
  author={Ren-Cang Li},
  year={2014},
  url={https://api.semanticscholar.org/CorpusID:14329440}
}

@article{grc2,
  title={Eigenvalues of Rayleigh quotient matrices},
  author={Ji-guang Sun},
  journal={Numerische Mathematik},
  year={1991},
  volume={59},
  pages={603-614},
  url={https://api.semanticscholar.org/CorpusID:121711088}
}

@article{uat,
  title={Are Transformers universal approximators of sequence-to-sequence functions?},
  author={Chulhee Yun and Srinadh Bhojanapalli and Ankit Singh Rawat and Sashank J. Reddi and Sanjiv Kumar},
  journal={ArXiv},
  year={2019},
  volume={abs/1912.10077},
  url={https://api.semanticscholar.org/CorpusID:209444410}
}

@article{yu2026attentionheadcount,
  author       = {Penghao Yu and
                  Haotian Jiang and
                  Zeyu Bao and
                  Ruoxi Yu and
                  Qianxiao Li},
  title        = {The Effect of Attention Head Count on Transformer Approximation},
  journal      = {CoRR},
  volume       = {abs/2510.06662},
  year         = {2025},
  url          = {https://doi.org/10.48550/arXiv.2510.06662},
  doi          = {10.48550/ARXIV.2510.06662},
  eprinttype   = {arXiv},
  eprint       = {2510.06662},
  bibsource    = {dblp computer science bibliography, https://dblp.org}
}

@article{lalora,
  title={Mitigating Forgetting in Low Rank Adaptation},
  author={Sliwa, Joanna and Schneider, Frank and Hennig, Philipp and Hern{\'a}ndez-Lobato, Jos{\'e} Miguel},
  journal={arXiv preprint arXiv:2512.17720},
  year={2025}
}

@inproceedings{fapm,
  author       = {Wei Huang and
                  Anda Cheng and
                  Yinggui Wang},
  editor       = {Christos Christodoulopoulos and
                  Tanmoy Chakraborty and
                  Carolyn Rose and
                  Violet Peng},
  title        = {Mitigating Catastrophic Forgetting in Large Language Models with Forgetting-aware
                  Pruning},
  booktitle    = {Proceedings of the 2025 Conference on Empirical Methods in Natural
                  Language Processing, {EMNLP} 2025, Suzhou, China, November 4-9, 2025},
  pages        = {21842--21856},
  publisher    = {Association for Computational Linguistics},
  year         = {2025},
  url          = {https://doi.org/10.18653/v1/2025.emnlp-main.1108},
  doi          = {10.18653/V1/2025.EMNLP-MAIN.1108},
  bibsource    = {dblp computer science bibliography, https://dblp.org}
}

@inproceedings{flow,
  author       = {Sunny Sanyal and
                  Hayden Prairie and
                  Rudrajit Das and
                  Ali Kavis and
                  Sujay Sanghavi},
  editor       = {Aarti Singh and
                  Maryam Fazel and
                  Daniel Hsu and
                  Simon Lacoste{-}Julien and
                  Felix Berkenkamp and
                  Tegan Maharaj and
                  Kiri Wagstaff and
                  Jerry Zhu},
  title        = {Upweighting Easy Samples in Fine-Tuning Mitigates Forgetting},
  booktitle    = {Forty-second International Conference on Machine Learning, {ICML}
                  2025, Vancouver, BC, Canada, July 13-19, 2025},
  series       = {Proceedings of Machine Learning Research},
  publisher    = {{PMLR} / OpenReview.net},
  year         = {2025},
  url          = {https://proceedings.mlr.press/v267/sanyal25a.html},
  bibsource    = {dblp computer science bibliography, https://dblp.org}
}

@article{ewc,
  author       = {James Kirkpatrick and
                  Razvan Pascanu and
                  Neil C. Rabinowitz and
                  Joel Veness and
                  Guillaume Desjardins and
                  Andrei A. Rusu and
                  Kieran Milan and
                  John Quan and
                  Tiago Ramalho and
                  Agnieszka Grabska{-}Barwinska and
                  Demis Hassabis and
                  Claudia Clopath and
                  Dharshan Kumaran and
                  Raia Hadsell},
  title        = {Overcoming catastrophic forgetting in neural networks},
  journal      = {CoRR},
  volume       = {abs/1612.00796},
  year         = {2016},
  url          = {http://arxiv.org/abs/1612.00796},
  eprinttype   = {arXiv},
  eprint       = {1612.00796},
  bibsource    = {dblp computer science bibliography, https://dblp.org}
}

@article{jin2025demystifying,
  title={Demystifying language model forgetting with low-rank example associations},
  author={Jin, Xisen and Ren, Xiang},
  journal={arXiv preprint arXiv:2406.14026},
  year={2024}
}

@inproceedings{huang2024ssr,
  title={Mitigating catastrophic forgetting in large language models with self-synthesized rehearsal},
  author={Huang, Jianheng and Cui, Leyang and Wang, Ante and Yang, Chengyi and Liao, Xinting and Song, Linfeng and Yao, Junfeng and Su, Jinsong},
  booktitle={Proceedings of the 62nd Annual Meeting of the Association for Computational Linguistics (Volume 1: Long Papers)},
  pages={1416--1428},
  year={2024}
}

@inproceedings{scialom2022finetuned,
  title={Fine-tuned language models are continual learners},
  author={Scialom, Thomas and Chakrabarty, Tuhin and Muresan, Smaranda},
  booktitle={Proceedings of the 2022 Conference on Empirical Methods in Natural Language Processing},
  pages={6107--6122},
  year={2022}
}
\bibliographystyle{style}

\newpage
\appendix

\par\noindent\rule{\textwidth}{2pt}
\begin{table}[h]
    \vspace{-0.3cm}
    \centering
    \Large
    \begin{tabular}{c}
\textbf{Appendix}
    \end{tabular}
    \vspace{-.5cm}
\end{table}
\par\noindent\rule{\textwidth}{1pt}

\tableofcontents
\addtocontents{toc}{\protect\setcounter{tocdepth}{2}}

\newpage
\section{Broader Impacts.}
FoLoRA supports more reliable and efficient adaptation of foundation models by helping fine-tuned models retain broad pretrained capabilities while learning specialized downstream tasks. By improving the preservation--adaptation balance, the method can make pretrained models more reusable across domains, reduce the need for repeated large-scale training, and support more capable task-specific systems that continue to benefit from general knowledge, instruction-following ability, factual reasoning, and long-context understanding. The model-generated calibration procedure further improves practicality by enabling preservation-aware fine-tuning without requiring access to the original pretraining corpus. Overall, FoLoRA contributes to scalable and resource-efficient foundation-model adaptation by providing an optimizer-level mechanism for maintaining useful pretrained behavior during specialization, while also noting that retaining broad pretrained capabilities may preserve undesirable or unsafe behaviors if applied without safety evaluation.

\section{Experimental Details}
\label{appen:experiments}
\subsection{Training Details and Hyperparameters} 
For a fair comparison, we follow the same optimization protocol across all methods. Specifically, we use the AdamW optimizer with a batch size of 128 and a learning rate of $2\times 10^{-5}$. We adopt a cosine annealing schedule with a warm-up ratio of 0.03. $\beta_1,\beta_2,\epsilon$ values in the AdamW optimizer are set to 0.9, 0.999, and $1\times 10^{-8}$, respectively. Training is performed for one epoch on the first 100,000 conversations of each target dataset, and the loss is computed on the response tokens. The maximum training sequence length is 512; the maximum gradient norm is set to 1. Throughout the training, we use bf16 precision; weight decay and dropout are not applied. Our experiments are conducted on a single NVIDIA GH200 120GB GPU. We set per-device training batch size to 16, and the gradient accumulation steps to 8. Since training runs on a single GPU, each optimizer update uses an effective batch size of 128 samples. The LoRA rank is set to 128 for FoLoRA and all LoRA-based baselines. For the OPLoRA~\cite{oplora} baseline, we use a projection rank of $k = 128$, which is best-operating point according to the original paper. For FoLoRA, we consistently use the hyperparameters $\lambda = 1$ and $s_{\mathrm{min}} = 0.3$, unless otherwise specified. As shown in Fig.~\ref{fig:folora_ablation}(d), these hyperparameters can be efficiently tuned through a modest one-dimensional search along each axis.

The fine-tuning prompt is as follows:

\begin{verbatim}
PROMPT = (
    "Below is an instruction that describes a task. "
    "Write a response that appropriately completes the request.\n\n"
    "### Instruction:\n{instruction}\n\n### Response:"
)
\end{verbatim}
The above setup is applied consistently across all baseline experiments.

\subsection{Covariance Matrix Construction From Block Input}
CorDA~\cite{corda}, LoRA-Null~\cite{loranull}, and FoLoRA (ours) methods need to construct the covariance matrices ($C_\mathrm{pre}, C_\mathrm{task}$) using each layer's input representations. For constructing the model-sampled proxy set used to estimate $C_{\mathrm{pre}}$, we sample 2,000 unconditionally generated outputs from the model. When constructing the covariance matrix from a single dataset (e.g., baseline approach, or $C_\mathrm{task}$ construction of FoLoRA), we use 256 query samples from the dataset. Visualized layer statistics (e.g., Fig.~\ref{fig:first_fig} and Fig.~\ref{fig:folora_ablation}(a)) are extracted from MLP down-projection matrix of 15-th transformer block.

\subsection{FoLoRA Initialization and Optimizer Assignment}
\label{app:folora_init}

FoLoRA uses a null-space preserving initialization for each adapted projection matrix.
Let
\[
W_0 \in \mathbb R^{d_{\mathrm{out}}\times d_{\mathrm{in}}}
\]
denote the pretrained weight of the projection being adapted. We parameterize the adapted
weight as
\[
W = W_{\mathrm{res}} + \kappa BA,
\qquad
A\in\mathbb R^{r\times d_{\mathrm{in}}},
\quad
B\in\mathbb R^{d_{\mathrm{out}}\times r},
\]
where \(W_{\mathrm{res}}\) is frozen, \(A\) and \(B\) are trainable, and
\(\kappa\) denotes the LoRA scaling factor. In our notation, \(\kappa=1\) if no explicit
LoRA scaling is used.

Let \(X_{\mathrm{pre}}\in\mathbb R^{d_{\mathrm{in}}\times T}\) be the pretraining-proxy
block-input activations used to construct the forgetting covariance
\[
C_{\mathrm{pre}}
=
X_{\mathrm{pre}}X_{\mathrm{pre}}^\top
=
U\Lambda' U^\top,
\qquad
\Lambda'=\mathrm{diag}(\lambda_1',\ldots,\lambda_{d_{\mathrm{in}}}'),
\]
with eigenvalues sorted as
\[
\lambda_1'\ge \lambda_2'\ge \cdots \ge \lambda_{d_{\mathrm{in}}}'\ge 0.
\]
Let
\[
U_{\perp}
=
\left[
u_{d_{\mathrm{in}}-r+1},
\ldots,
u_{d_{\mathrm{in}}}
\right]
\in\mathbb R^{d_{\mathrm{in}}\times r}
\]
be the matrix of the bottom-\(r\) eigenvectors of \(C_{\mathrm{pre}}\). We initialize
\[
A_0 = U_{\perp}^{\top},
\qquad
B_0 = \kappa^{-1} W_0 U_{\perp},
\qquad
W_{\mathrm{res}} = W_0 - \kappa B_0A_0.
 = W_0 -  W_0 U_{\perp}U_{\perp}^{\top}\]
Therefore,
\[
W_{\mathrm{res}}+\kappa B_0A_0 = W_0,
\]
so the adapted layer exactly matches the pretrained layer at initialization as described in the main paper, yielding the Eq.~\ref{eq:lora_init}. 

This initialization places the input-side LoRA factor in the empirical null, or near-null,
space of the pretraining-proxy activations. Indeed,
\[
A_0X_{\mathrm{pre}}
=
U_{\perp}^{\top}X_{\mathrm{pre}},
\]
and hence
\[
\|A_0X_{\mathrm{pre}}\|_F^2
=
\mathrm{Tr}\!\left(U_{\perp}^{\top}C_{\mathrm{pre}}U_{\perp}\right)
=
\sum_{i=d_{\mathrm{in}}-r+1}^{d_{\mathrm{in}}}\lambda_i.
\]
Thus, if the selected bottom-\(r\) eigenvalues are zero, then
\(A_0X_{\mathrm{pre}}=0\) exactly, yielding Eq.~\ref{eq:drift_eq}. When they are small but nonzero, \(A_0\) lies in an
empirical low-pretraining-variance subspace, so \(A_0X_{\mathrm{pre}}\) is small.

As described in the main paper, drift of pretrain behavior is primarily carried by $A$ (Eq.~\ref{eq:drift_eq}), FoLoRA applies spectral-gated Adam (SAdam) to the input-side
factor \(A\), equivalently to the update \(\Delta A=A-A_0\), while the output-side factor
\(B\) is trained with standard AdamW in the original coordinates. Thus, the training trajectory of down-projection $A$ is guided by SAdam optimizer, while up-projection $B$ is optimized without explicit constraint. 

\subsection{Detailed Configuration of Model-generated Calibration Dataset}
We sample $N=2{,}000$ unconditional generations from the base model in
\texttt{float16} on a single GPU with single random seed. Each generation is conditioned
on a single BOS and produced by ancestral sampling
with $\ell_{\min}=128$ and $\ell_{\max}=512$ new tokens (\texttt{eos} permitted
only after $\ell_{\min}$). To balance fidelity to $p_{\theta}$ with coverage,
the decoding hyperparameters for each sample are drawn from a fixed
three-component mixture:
(i) with probability $0.70$, $T=1.0$, top-$p=1.0$, top-$k=0$ (closest to
$p_{\theta}$);
(ii) with probability $0.20$, $T=1.0$, top-$p=0.95$, top-$k=0$ (mild
truncation for stability);
(iii) with probability $0.10$, $T=1.2$, top-$p=0.98$, top-$k=50$ (broader
exploration).
After generation, samples are filtered by (a) exact-text deduplication
(SHA-1) and (b) a degeneracy screen that rejects sequences shorter than $32$
tokens, with a run of $\ge 20$ identical consecutive tokens, unique-token ratio
below $0.12$ at length $\ge 128$, or unique-$4$-gram ratio below $0.25$ at
length $\ge 256$. Sampling is repeated until $2{,}000$ samples pass both
filters.

\subsection{Benchmark Details}
\label{app:benchmark-details}

We evaluate each method along two axes: downstream adaptation and foundation preservation. 
Downstream adaptation benchmarks measure how well the fine-tuned model learns the target task, 
whereas preservation benchmarks measure whether non-target capabilities acquired during pretraining 
are retained after fine-tuning. All scores are reported as percentages, and higher values indicate 
better performance.

\paragraph{Downstream adaptation benchmarks.}
For math adaptation, we fine-tune on MetaMathQA and evaluate on GSM8K and MATH. GSM8K consists 
of grade-school math word problems that require multi-step arithmetic reasoning, while MATH contains 
more challenging competition-style mathematical problems. For code adaptation, we fine-tune on 
CodeFeedback and evaluate on HumanEval and MBPP. HumanEval evaluates functional correctness on 
Python programming problems, and MBPP evaluates the ability to synthesize short programs from natural 
language descriptions. For instruction-following adaptation, we fine-tune on WizardLM-Evol-Instruct 
and evaluate on IFEVAL, which tests whether a model follows verifiable natural-language instructions. 
For each adaptation setting, we report the task-specific benchmark scores and use their arithmetic 
mean as Avg2 when multiple downstream benchmarks are used.

\paragraph{Foundation-preservation benchmarks.}
To measure preservation of pretrained capabilities, our main evaluation uses TriviaQA, NQ Open, 
and WebQS. These benchmarks test open-domain factual question answering and therefore provide a 
direct measure of whether fine-tuning preserves world knowledge and retrieval-style reasoning. 
We report exact-match scores on each benchmark. We also report Avg1, the arithmetic mean of the 
three preservation scores, and Avg1(\%), the ratio of Avg1 to the corresponding score of the 
original pretrained model. Avg1(\%) therefore measures how much of the pretrained model's 
non-target performance is retained after adaptation.

\paragraph{Broader preservation benchmarks.}
In addition to the main factual-retrieval benchmarks, we evaluate preservation on a broader 
benchmark suite in the Qwen3 experiments. Since the Qwen3 experiments fine-tune on MetaMathQA, all benchmarks 
in this suite are treated as non-target preservation benchmarks, while the math score is reported 
separately as the downstream adaptation metric.

The broader suite covers several complementary capabilities. IFEVAL evaluates instruction 
following by testing whether model outputs satisfy explicit and verifiable constraints in the 
prompt. In particular, we report the \textit{Prompt-Level Strict Accuracy} of IFEVAL, where forgetting behavior is more clearly manifested. TruthfulQA measures truthfulness and hallucination robustness by asking questions that 
often elicit common misconceptions or false answers. SQuAD-v2 evaluates reading comprehension, where 
the model must answer questions using evidence from a provided passage. Specifically, we report \textit{HasAns Exact} of SquAD-v2, which is exact match score on the answerable data subset. TriviaQA and NQ Open 
measure open-domain factual question answering and retrieval-style world knowledge. MMLU evaluates 
broad multitask knowledge and reasoning across diverse academic and professional domains. BABILong 
and LongBench-v2 evaluate long-context understanding, including the ability to locate, integrate, 
and reason over information distributed across long inputs. GPQA evaluates difficult graduate-level 
scientific reasoning, providing a high-level test of whether fine-tuning preserves specialized 
reasoning ability beyond the target math adaptation task.

For this broader evaluation, we report each benchmark score individually and compute Avg1 as the 
arithmetic mean over all preservation benchmarks in the suite. This aggregate score summarizes 
overall foundation preservation across heterogeneous capabilities, while the separate benchmark 
scores reveal which capabilities are most affected by finetuning-induced forgetting . This is important because 
forgetting is not uniform across domains: a method may preserve factual retrieval while degrading 
instruction following, long-context reasoning, or scientific knowledge. The broader suite therefore 
provides a more comprehensive view of the preservation--adaptation trade-off than factual QA alone.

\section{Additional Relevant Literature}

\emph{Replay-based methods} preserve prior behavior by mixing examples from previous tasks or pretraining-like data into the current fine-tuning stage. Continual-T0~\cite{scialom2022finetuned} shows that language models can continually learn new instruction tasks with a small rehearsal buffer, while Self-Synthesized Rehearsal (SSR)~\cite{huang2024ssr} reduces dependence on stored previous data by generating synthetic rehearsal examples. More recent targeted replay methods predict which upstream examples are likely to be forgotten, for example by modeling task--example forgetting associations with low-rank matrix completion~\cite{jin2025demystifying}. These approaches are effective and model-agnostic, but operate at the data level, requiring access to stored, generated, or replayed examples. In large language model fine-tuning, such replay-based strategies are often impractical due to the additional replay batches required during training, which substantially increase computational cost. As a result, most practical forgetting-aware LLM fine-tuning methods avoid replay-based mechanisms, except in a few experimental continual-learning settings. 

\emph{Regularization-based methods} constrain fine-tuning by penalizing deviations from pretrained parameters or by restricting updates to parameters estimated to be important for preserving source-domain behavior. Classical continual-learning approaches such as EWC~\cite{ewc} use parameter-importance estimates to regularize updates, while recent LoRA-based methods such as LaLoRA~\cite{lalora} estimate parameter uncertainty through a Laplace approximation and penalize changes to adapter parameters associated with low uncertainty. These methods make fine-tuning more preservation-aware by modifying the training objective, but the regularization is still applied in the original parameter coordinates and does not explicitly compare the downstream utility of an update direction against its preservation cost, typically leading to suboptimal performance on downstream task. 

\emph{Data- and loss-centric methods} reduce forgetting by changing which examples or tokens dominate the fine-tuning signal. Low-perplexity token learning, implemented through Selective Token Masking (STM)~\cite{stm}, observes that high-perplexity tokens can induce larger disruptive updates and masks them to reduce non-target degradation. FLOW~\cite{flow} takes a sample-weighting perspective in the data-oblivious setting, upweighting target examples on which the pretrained model has low loss so that fine-tuning gradients remain more aligned with the pretrained model. They regulate the data or loss contribution directly rather than the geometry of parameter updates: hard but task-informative tokens or examples can be completely ruled out, and the optimizer is not given an explicit direction-wise preservation criterion. 

\emph{Update editing and pruning methods} aim to remove components of the learned task update that are likely to cause forgetting. FAPM~\cite{fapm} prunes the task vector using both magnitude and the ratio between task-vector values and pretrained parameters, thereby retaining components important for target performance while suppressing components more likely to induce catastrophic forgetting. This approach is attractive because it does not require additional data, architecture changes, or modifications to the original training process. However, it is primarily a post-hoc (post-training) process, and therefore cannot directly optimize the retention-adaptation tradeoff. 

Finally, Subspace-based forgetting-aware methods are the closest to our setting and are discussed in the main paper. Existing approaches improve preservation through safer initialization subspaces, activation-aware adapter construction, regularization, or hard projection constraints~\cite{lora_forgetting,milora,corda,loranull,oplora}. FoLoRA is complementary to this line of work, from which it differs in the optimization-time mechanism: rather than relying only on a static initialization, fixed admissible subspace, or hard constraint, it continuously gates LoRA update directions according to their downstream utility per unit preservation cost.

\section{Extensions to Full-Weight Fine-Tuning}
\label{app:beyond_lora}
Consider a pretrained linear map
\[
    y = W_0 x, \qquad W_0 \in \mathbb{R}^{d_{\rm out}\times d_{\rm in}},
\]
inside a nonlinear block, and let \(W = W_0 + \Delta W\) denote a dense
full-parameter update. As discussed in Section~3.1, the first-order change of the block output on pretraining-proxy activations is governed by \(\Delta W X_{\rm pre}\). Therefore, the same retention and task-utility metrics used for LoRA can be defined directly on the dense perturbation:
\[
    R_{\rm pre}^{\tau}(\Delta W)
    :=
    \|\Delta W X_{\rm pre}\|_F^2 + \tau \|\Delta W\|_F^2
    =
    \operatorname{Tr}(\Delta W H \Delta W^\top),
    \qquad
    H := C_{\rm pre} + \tau I,
\]
and
\[
    U_{\rm task}(\Delta W)
    :=
    \|\Delta W X_{\rm task}\|_F^2
    =
    \operatorname{Tr}(\Delta W C_{\rm task}\Delta W^\top),
\]
where
\[
    C_{\rm pre}=X_{\rm pre}X_{\rm pre}^{\top},
    \qquad
    C_{\rm task}=X_{\rm task}X_{\rm task}^{\top}.
\]
For a rank-one dense perturbation \(\Delta W = uv^\top\), with
\(u\in\mathbb{R}^{d_{\rm out}}\) and \(v\in\mathbb{R}^{d_{\rm in}}\), we obtain
\[
    R_{\rm pre}^{\tau}(u,v)
    =
    \|u\|_2^2 v^\top H v,
    \qquad
    U_{\rm task}(u,v)
    =
    \|u\|_2^2 v^\top C_{\rm task} v.
\]
The output-side scale \(\|u\|_2^2\) cancels in the utility-to-risk ratio, yielding
the same generalized Rayleigh quotient as in the LoRA case:
\[
    \rho(v)
    =
    \frac{v^\top C_{\rm task}v}{v^\top H v}.
\]
Thus, dense full-parameter fine-tuning induces the same generalized eigenvalue
problem
\[
    C_{\rm task} v = \gamma H v.
\]
Let
\[
    H^{-1/2}C_{\rm task}H^{-1/2}
    =
    R\Gamma R^\top,
    \qquad
    \Gamma=\operatorname{diag}(\gamma_1,\ldots,\gamma_{d_{\rm in}}),
\]
and define
\[
    T := H^{-1/2}R.
\]
Then
\[
    T^\top H T = I,
    \qquad
    T^\top C_{\rm task}T = \Gamma.
\]
Therefore, if a dense perturbation is represented in the generalized coordinate
system as
\[
    \Delta W = \Delta \bar W T^\top,
\]
then the preservation and utility metrics diagonalize:
\[
    R_{\rm pre}^{\tau}(\Delta W)
    =
    \|\Delta \bar W\|_F^2,
    \qquad
    U_{\rm task}(\Delta W)
    =
    \operatorname{Tr}(\Delta \bar W \Gamma \Delta \bar W^\top)
    =
    \sum_{i=1}^{d_{\rm in}}\gamma_i
    \|\Delta \bar W_{:i}\|_2^2.
\]
Consequently, the generalized eigenvalue \(\gamma_i\) provides a
direction-wise utility-to-risk score for the \(i\)-th spectral coordinate even
when the update is dense.

The full-parameter version of FoLoRA can therefore be implemented by applying
SAdam to dense weight updates in the transformed coordinate system. Let
\[
    G_t := \nabla_{W}\mathcal{L}(W_t)
\]
be the ordinary dense gradient at step \(t\). The gradient in the generalized
coordinate system is
\[
    \bar G_t = G_t T = G_t H^{-1/2}R.
\]
We maintain Adam moments in this coordinate system, i.e.,
\[
    \bar m_t
    =
    \beta_1\bar m_{t-1}
    +
    (1-\beta_1)\bar G_t,
    \qquad
    \bar v_t
    =
    \beta_2\bar v_{t-1}
    +
    (1-\beta_2)\bar G_t^{\odot 2},
\]
with bias-corrected moments
\[
    \hat{\bar m}_t
    =
    \frac{\bar m_t}{1-\beta_1^t},
    \qquad
    \hat{\bar v}_t
    =
    \frac{\bar v_t}{1-\beta_2^t}.
\]
Using the same spectral gate as in the main method,
\[
    s_i
    =
    \psi(\gamma_i)
    =
    \frac{\gamma_i}{\gamma_i+\lambda},
    \qquad
    \lambda>0,
\]
the transformed dense update is
\[
    \Delta \bar W_t
    =
    -\eta
    \left(
    \frac{\hat{\bar m}_t}{\sqrt{\hat{\bar v}_t}+\epsilon}
    \odot
    \max(s,s_{\min})
    \right),
\]
where \(s\in\mathbb{R}^{d_{\rm in}}\) is broadcasted across the output dimension.
Finally, the update is mapped back to the original parameter space:
\[
    \Delta W_t
    =
    \Delta \bar W_t T^\top
    =
    \Delta \bar W_t R^\top H^{-1/2},
    \qquad
    W_{t+1}=W_t+\Delta W_t.
\]
This gives a dense, full-parameter analogue of FoLoRA: instead of restricting
the update to a low-rank subspace, the optimizer performs full-rank updates but
attenuates each input-side spectral direction according to its task utility per
unit preservation risk.

In practice, this extension can be applied independently to each adapted
attention or MLP projection matrix. Parameters that do not admit a natural
input-activation covariance, such as biases or normalization scalars, can either
be frozen or updated with the base optimizer. For memory efficiency, one may also
use a truncated generalized basis \(T_k=H^{-1/2}R_k\), where \(R_k\) contains the
top \(k\) generalized eigendirections, and apply the gated update only within
this subspace.

\section{Additional Proofs}
\label{app:additional-proofs}
\paragraph{Roadmap.}
This appendix formalizes the connection between next-token prediction,
forward KL minimization, and Transformer approximation theory. We first show
that the population next-token prediction objective is exactly the forward KL
divergence from the data distribution to the model distribution, up to the
constant entropy of the data distribution. This converts population pretraining
into a forward KL minimization problem. We then derive an immediate
\(\varepsilon\)-optimality corollary, which separates the error into an
optimization term and a model-class misspecification term. The remaining
subsections control this misspecification term for Transformer models: first
qualitatively using a universal approximation argument for smoothed next-token
logits, and then quantitatively using a Jackson-type Transformer approximation
rate.
\subsection{Proof of Theorem~\ref{thm:ntp_forward_kl}} 
\begin{theorem}[Next-token prediction as forward KL minimization]
\label{thm:ntp_forward_kl}
Let $\mathcal X=\mathcal V^T$ be the discrete space of token sequences of length $T$ over a finite vocabulary $\mathcal V$. Let
$P=P_{\mathrm{data}}\in\Delta(\mathcal X)$ denote the data distribution. Consider an autoregressive model family
$\{Q_\theta:\theta\in\Theta\}$ with
\[
Q_\theta(x)
=
\prod_{t=1}^T q_\theta(x_t\mid x_{<t}),
\qquad
x=(x_1,\dots,x_T).
\]
Assume $Q_\theta(x)>0$ whenever $P(x)>0$. Define the population next-token prediction loss
\[
\mathcal{L}(\theta)
:=
\mathbb E_{X\sim P}
\left[
-\sum_{t=1}^T
\log q_\theta(X_t\mid X_{<t})
\right].
\]
Then
\[
\boxed{
\mathcal{L}(\theta)
=
\mathcal{H}(P)
+
D_{\mathrm{KL}}(P\Vert Q_\theta)
}
\]
where
\[
\mathcal{H}(P):=-\sum_{x\in\mathcal X}P(x)\log P(x), \qquad D_{\mathrm{KL}}(P\Vert Q_\theta)
:=
\sum_{x\in\mathcal X}
P(x)\log\frac{P(x)}{Q_\theta(x)}.
\]
$\mathcal{H}$ is independent of $\theta$, optimization on next-token prediction directly minimize forward KL divergence:
\[
\arg\min_{\theta\in\Theta} \mathcal{L}(\theta)
=
\arg\min_{\theta\in\Theta}
D_{\mathrm{KL}}(P\Vert Q_\theta).
\]
\end{theorem}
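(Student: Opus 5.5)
The plan is to use the chain rule for the autoregressive factorization to turn the sum of per-token log-losses into a single log-probability of the whole sequence. That identity is then split by adding and subtracting $\log P(x)$. For every $x$ with $P(x)>0$, the assumption $Q_\theta(x)>0$ makes each factor $q_\theta(x_t\mid x_{<t})$ strictly positive, so the logarithm of the product is the sum of the logarithms:
\[
-\sum_{t=1}^T \log q_\theta(x_t\mid x_{<t}) = -\log Q_\theta(x).
\]
Since $\mathcal X=\mathcal V^T$ is finite, the expectation under $P$ is a finite sum over the support of $P$, so
\[
\mathcal L(\theta) = -\sum_{x:\,P(x)>0} P(x)\log Q_\theta(x).
\]

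Next, for each $x$ in the support of $P$, I would write
\[
-\log Q_\theta(x) = -\log P(x) + \log\frac{P(x)}{Q_\theta(x)}.
\]
Multiplying by $P(x)$ and summing gives $\mathcal H(P)+D_{\mathrm{KL}}(P\Vert Q_\theta)$. Both sums are finite because only support points contribute, and points with $P(x)=0$ are handled by the convention $0\log 0=0$ in both $\mathcal H$ and $D_{\mathrm{KL}}$.

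The argmin statement then follows immediately: $\mathcal H(P)$ depends only on $P$, not on $\theta$, so $\mathcal L$ and $D_{\mathrm{KL}}(P\Vert Q_\theta)$ differ by a constant on $\Theta$ and have the same minimizers (either both sets are empty or they coincide).

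I expect no genuine obstacle. The only point that needs care is the support bookkeeping: justifying that the log of the product splits termwise, and that terms with $P(x)=0$ vanish under the $0\log 0$ convention.
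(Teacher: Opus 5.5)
Your proposal is correct and follows the paper's proof: the autoregressive factorization turns $\mathcal L(\theta)$ into $-\sum_x P(x)\log Q_\theta(x)$, adding and subtracting $\log P(x)$ gives $\mathcal H(P)+D_{\mathrm{KL}}(P\Vert Q_\theta)$, and the argmin claim follows because $\mathcal H(P)$ does not depend on $\theta$. Your extra support bookkeeping (positivity of each factor, the $0\log 0$ convention, finiteness of both terms) makes explicit what the paper's proof leaves implicit.
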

\begin{proof}
By the autoregressive factorization,
\[
\sum_{t=1}^T
\log q_\theta(x_t\mid x_{<t})
=
\log Q_\theta(x).
\]
Hence
\[
\mathcal{L}(\theta)
=
\mathbb E_{X\sim P}[-\log Q_\theta(X)]
=
-\sum_{x\in\mathcal X}P(x)\log Q_\theta(x).
\]
Expanding the KL divergence gives
\[
\begin{aligned}
D_{\mathrm{KL}}(P\Vert Q_\theta)
&=
\sum_{x\in\mathcal X}
P(x)\log\frac{P(x)}{Q_\theta(x)}  \\
&=
\sum_{x\in\mathcal X}P(x)\log p(x)
-
\sum_{x\in\mathcal X}P(x)\log Q_\theta(x) \\
&=
-\mathcal{H}(P)+L(\theta).
\end{aligned}
\]
Therefore,
\[
\mathcal{L}(\theta)
=
\mathcal{H}(P)
+
D_{\mathrm{KL}}(P\Vert Q_\theta).
\]
Since $\mathcal{H}(P)$ is independent of $\theta$, minimizing $\mathcal{L}(\theta)$ is equivalent to minimizing
$D_{\mathrm{KL}}(P\Vert Q_\theta)$.

\paragraph{From the loss identity to optimization error.}
Theorem~\ref{thm:ntp_forward_kl} shows that minimizing the population
next-token loss is equivalent to minimizing
\(D_{\mathrm{KL}}(P_{\mathrm{data}}\Vert Q_\theta)\), since the entropy
\(\mathcal{H}(P_{\mathrm{data}})\) does not depend on \(\theta\). Therefore, any
suboptimality in the population loss transfers directly to suboptimality in
forward KL. This yields the following corollary, which separates the learned
model's KL error into an optimization error and the best achievable KL within
the model class.

\subsection{Proof of Corollary~\ref{col:epsilonoptimal}}
\begin{corollary} 
\label{col:epsilonoptimal}
If $\hat\theta$ is an $\varepsilon$-optimal solution for the population next-token loss, i.e.,
\[
\mathcal{L}(\hat\theta)
\le
\inf_{\theta\in\Theta}\mathcal{L}(\theta)+\varepsilon,
\]
then
\begin{equation}
\boxed{
D_{\mathrm{KL}}
\left(
P_{\mathrm{data}}\Vert Q_{\hat\theta}
\right)
\le
\inf_{\theta\in\Theta}
D_{\mathrm{KL}}(P\Vert Q_\theta)+\varepsilon.
}
\end{equation}
Here, \(\epsilon\) captures optimization error, which depends on the optimization landscape and the optimizer. The term
\[
\inf_{\theta\in\Theta}D_{\mathrm{KL}}(P_{\mathrm{data}}\|Q_\theta)
\]
is the model-class misspecification gap, governed by the expressive power of the model class.
\end{corollary}

\begin{proof}
Using the same identity,
\[
\inf_{\theta\in\Theta}L(\theta)
=
\mathcal{H}(P)
+
\inf_{\theta\in\Theta}
D_{\mathrm{KL}}(P\Vert Q_\theta)
=
\mathcal{H}(p)+D^*.
\]
Thus, if $\mathcal{L}(\hat\theta)\le \inf_{\theta}L(\theta)+\varepsilon$, then
\[
\begin{aligned}
D_{\mathrm{KL}}(P\Vert Q_{\hat\theta})
&=
\mathcal{L}(\hat\theta)-\mathcal{H}(P) \\
&\le
\inf_{\theta}\mathcal{L}(\theta)+\varepsilon-\mathcal{H}(P) \\
&=
D^*+\varepsilon.
\end{aligned}
\]
Substituting $P=P_{\mathrm{data}}$ and $Q_{\hat\theta}=P_{\hat\theta}$ completes the proof.
\end{proof}
The next subsection shows that, for a sufficiently expressive Transformer
logit class, this misspecification gap can be made arbitrarily small by applying
universal approximation to the smoothed true next-token logit map.

\subsection{KL-bound via Universal Approximation Theorem. }
\label{appen:ua-to-kl}
The preceding corollary reduces the statistical approximation question to
bounding the model-class misspecification gap
\[
    \inf_{\theta\in\Theta}
    D_{\mathrm{KL}}(P_{\mathrm{data}}\Vert P_\theta).
\]
We now show that this gap vanishes for a sufficiently expressive Transformer
logit class. The proof applies universal approximation to the target
next-token logit map rather than directly to probabilities. Since the true
conditionals may assign zero probability to some tokens, we first smooth them;
this makes the target logits finite and compatible with the softmax model,
which always assigns positive probability to every vocabulary element.

Let \(\mathcal V\) be a finite vocabulary with \(M=|\mathcal V|\), and let
\(P_{\mathrm{data}}\in\Delta(\mathcal V^T)\) be the pretraining distribution
over length-\(T\) token sequences \(X=(X_1,\ldots,X_T)\). For \(t\in[T]\), write
\(X_{<t}=(X_1,\ldots,X_{t-1})\). For each data-supported prefix
\(h\in\mathcal V^{t-1}\), define the true next-token conditional
\begin{equation}
    q_t(v\mid h)
    :=
    P_{\mathrm{data}}(X_t=v\mid X_{<t}=h),
    \qquad v\in\mathcal V .
    \label{eq:true-conditional-token}
\end{equation}
For unsupported prefixes, \(q_t(\cdot\mid h)\) may be defined arbitrarily, since
such prefixes have zero probability under \(P_{\mathrm{data}}\).

The model is autoregressive:
\begin{equation}
    P_\theta(x_{1:T})
    =
    \prod_{t=1}^T r_{t,\theta}(x_t\mid x_{<t}),
    \qquad
    r_{t,\theta}(\cdot\mid h)
    =
    \mathrm{softmax}\bigl(\widehat\ell_{t,\theta}(h)\bigr),
    \label{eq:model-token-autoregressive}
\end{equation}
where \(\widehat\ell_{t,\theta}(h)\in\mathbb R^M\) denotes the model logit
vector. Let
\[
    \mathcal P_T^{\mathrm{supp}}
    :=
    \{(t,h): t\in[T],\ h\in\mathcal V^{t-1},\
    P_{\mathrm{data}}(X_{<t}=h)>0\}
\]
denote the set of data-supported prefixes.

For \(\alpha\in(0,1)\), define the smoothed conditional and its logit map by
\begin{equation}
    q_{t,\alpha}(v\mid h)
    :=
    (1-\alpha)q_t(v\mid h)
    +
    \frac{\alpha}{M},
    \qquad
    U_\alpha(t,h)
    :=
    \bigl(\log q_{t,\alpha}(v\mid h)\bigr)_{v\in\mathcal V} \in \mathbb{R}^M.
    \label{eq:smoothed-target-logits}
\end{equation}
Since \(q_{t,\alpha}(v\mid h)\geq \alpha/M\), the target logits
\(U_\alpha(t,h)\) are finite for all \(v\in\mathcal V\).

\paragraph{Finite-prefix logit approximation.}
For fixed vocabulary size \(M\) and sequence length \(T\), the set of
data-supported prefixes \(\mathcal P_T^{\mathrm{supp}}\) is finite. After token
and positional embeddings, the smoothed target logit map \(U_\alpha\) is
therefore a finite table of bounded vectors in \(\mathbb R^M\). By extending
this finite table to a continuous function on a compact neighborhood, the
Transformer universal approximation theorem with positional encodings motivates
the uniform finite-prefix logit approximation assumption used below.
\begin{lemma}[Uniform sequence-to-vector Transformer approximation]
\label{lem:uniform-seq-to-vector}
Let $d,T,m<\infty$ and let
\[
\mathcal K=[0,1]^{d\times T}.
\]
Under the sequence-to-vector Transformer approximation framework of
\cite{yu2026attentionheadcount}, for every continuous target
$F\in C(\mathcal K;\mathbb R^m)$ and every $\varepsilon>0$, there exists a
sequence-to-vector Transformer map
\[
G_\theta:\mathcal K\to\mathbb R^m
\]
with sufficiently large parameter budget such that
\[
\sup_{X\in\mathcal K}
\|G_\theta(X)-F(X)\|_\infty
<
\varepsilon.
\]
\end{lemma}
\begin{proof}
The proof is a direct consequence of the approximation results of
\cite{yu2026attentionheadcount}. Their Definition~1 uses the uniform
approximation criterion
\[
\sup_{X_T\in\mathcal X_T}
\|\widehat H(X_T)-F(X_T)\|_\infty<\varepsilon.
\]
Their Theorem~1 proves that the generalized $D$-retrieval target family is
dense in the space of continuous sequence-to-vector functions. Hence, for any
coordinate function $F_j$ of $F$ and any $\varepsilon>0$, there exists a
generalized retrieval target $F_{D,j}$ such that
\[
\sup_{X\in\mathcal K}|F_j(X)-F_{D,j}(X)|
<
\frac{\varepsilon}{2}.
\]
The vector-valued case is handled coordinate-wise, by stacking the coordinate
approximants into a vector-valued generalized retrieval target $F_D$ satisfying
\[
\sup_{X\in\mathcal K}
\|F(X)-F_D(X)\|_\infty
<
\frac{\varepsilon}{2}.
\]
Their Transformer approximation theorem then gives a sequence-to-vector
Transformer $G_\theta$ with sufficiently large parameter budget such that
\[
\sup_{X\in\mathcal K}
\|G_\theta(X)-F_D(X)\|_\infty
<
\frac{\varepsilon}{2}.
\]
By the triangle inequality,
\[
\sup_{X\in\mathcal K}
\|G_\theta(X)-F(X)\|_\infty
<
\varepsilon.
\]
This proves the claim.
\end{proof}

\begin{proposition}[Uniform approximation of smoothed finite-prefix logits]
\label{prop:finite-prefix-logit-approx}
Assume that the autoregressive Transformer logit class contains, or can
simulate, the sequence-to-vector Transformer class in
Lemma~\ref{lem:uniform-seq-to-vector} on padded prefix inputs. Then, for every
fixed $\alpha\in(0,1)$ and every $\eta>0$, there exists a Transformer logit map
$\ell_{t,\theta}(h)\in\mathbb R^{M}$ such that
\begin{equation}
\max_{(t,h)\in\mathcal P_T^{\rm supp}}
\max_{v\in\mathcal V}
\left|
\ell^v_{t,\theta}(h)
-
\log q_{t,\alpha}(v\mid h)
\right|
<\eta .
\label{eq:uniform-logit-approx}
\end{equation}
\end{proposition}

\begin{proof}
Because $\mathcal V$ and $T$ are finite, the supported-prefix set
$\mathcal P_T^{\rm supp}$ is finite. Thus $U_\alpha$ is a bounded finite table
on $\mathcal P_T^{\rm supp}$.

We now encode each prefix as a fixed-length sequence in a compact Euclidean
domain. Let $\bot\notin\mathcal V$ be a padding symbol, and choose an injective
one-hot embedding
\[
e:\mathcal V\cup\{\bot\}
\to
\{0,1\}^{M+1}
\subset [0,1]^{M+1}.
\]
For a supported prefix $(t,h)\in\mathcal P_T^{\rm supp}$, with
$h=(h_1,\ldots,h_{t-1})$, define the padded prefix encoding
\[
\iota(t,h)
:=
(x_1,\ldots,x_T)
\in [0,1]^{(M+1)\times T}
\]
by
\[
x_j
=
\begin{cases}
e(h_j), & j<t,\\
e(\bot), & j\ge t.
\end{cases}
\]
The map $\iota$ is injective: the first padding position identifies $t$, and
the preceding non-padding entries identify $h$.

Define
\[
\mathcal K
:=
[0,1]^{(M+1)\times T},
\qquad
\mathcal Z
:=
\{\iota(t,h):(t,h)\in\mathcal P_T^{\rm supp}\}.
\]
Then $\mathcal Z$ is a finite subset of the compact domain $\mathcal K$.
Define a finite target table on $\mathcal Z$ by
\[
Y(\iota(t,h)):=U_\alpha(t,h).
\]

We next extend this finite table to a continuous function on $\mathcal K$.
Write $\mathcal Z=\{z_1,\ldots,z_N\}$ and $y_i:=Y(z_i)$. If $N\ge 2$, choose
\[
0<\rho<
\frac12
\min_{i\ne j}\|z_i-z_j\|_\infty.
\]
If $N=1$, choose any $\rho>0$. For each $i\in[N]$, define the continuous bump
function
\[
\varphi_i(x)
:=
\max\left\{
0,\,
1-\frac{\|x-z_i\|_\infty}{\rho}
\right\},
\qquad x\in\mathcal K.
\]
Then
\[
\varphi_i(z_i)=1,
\qquad
\varphi_i(z_j)=0
\quad\text{for } j\ne i.
\]
Now define
\[
\widetilde U_\alpha(x)
:=
\sum_{i=1}^N y_i\,\varphi_i(x),
\qquad x\in\mathcal K.
\]
Then
\[
\widetilde U_\alpha\in C(\mathcal K;\mathbb R^{M})
\]
and, for every supported prefix $(t,h)\in\mathcal P_T^{\rm supp}$,
\[
\widetilde U_\alpha(\iota(t,h))
=
U_\alpha(t,h).
\]

By Lemma~\ref{lem:uniform-seq-to-vector}, applied with
$d=M+1$ and output dimension $m=M$, there exists a
sequence-to-vector Transformer
\[
G_\theta:\mathcal K\to\mathbb R^{M}
\]
such that
\[
\sup_{x\in\mathcal K}
\|G_\theta(x)-\widetilde U_\alpha(x)\|_\infty
<
\eta.
\]
Define the autoregressive next-token logit map by evaluating this Transformer
on the padded prefix:
\[
\ell_{t,\theta}(h)
:=
G_\theta(\iota(t,h)).
\]
Then, for every $(t,h)\in\mathcal P_T^{\rm supp}$,
\[
\begin{aligned}
\max_{v\in\mathcal V}
\left|
\ell^v_{t,\theta}(h)
-
\log q_{t,\alpha}(v\mid h)
\right|
&=
\left\|
G_\theta(\iota(t,h))
-
U_\alpha(t,h)
\right\|_\infty\\
&=
\left\|
G_\theta(\iota(t,h))
-
\widetilde U_\alpha(\iota(t,h))
\right\|_\infty\\
&<\eta.
\end{aligned}
\]
Taking the maximum over all supported prefixes gives
Eq.~\eqref{eq:uniform-logit-approx}.
\end{proof}

\end{proof}

\begin{theorem}[KL control from universal logit approximation]

Let
\[
r_{t,\theta}(\cdot\mid h)
:=
\operatorname{softmax}(\ell_{t,\theta}(h))
\]
and define the autoregressive model
\[
P_\theta(x_{1:T})
=
\prod_{t=1}^T r_{t,\theta}(x_t\mid x_{<t}).
\]
Assume that the uniform logit approximation in Eq.~\eqref{eq:uniform-logit-approx} holds; that is,
\[
\max_{(t,h)\in\mathcal P_T^{\mathrm{supp}}}\max_{v\in V}
\left|\ell^v_{t,\theta}(h)-\log q_{t,\alpha}(v\mid h)\right|
\le \eta .
\]
Then
\[
D_{\rm KL}(P_{\rm data}\|P_\theta)
\le
T\left[-\log(1-\alpha)+2\eta\right].
\]
Consequently, for every $\varepsilon_{\rm KL}>0$, choosing
\[
\alpha
=
1-\exp\left(-\frac{\varepsilon_{\rm KL}}{2T}\right),
\qquad
\eta
=
\frac{\varepsilon_{\rm KL}}{4T}
\]
gives
\[
D_{\rm KL}(P_{\rm data}\|P_\theta)
\le
\varepsilon_{\rm KL}.
\]
\end{theorem}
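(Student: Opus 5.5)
The plan is to use the chain rule for KL divergence. This reduces the sequence-level bound to a per-step bound on each supported prefix. Each per-step KL will then be split into a smoothing term and a logit-approximation term.

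\textbf{Step 1 (chain rule).} Since $P_{\rm data}$ and $P_\theta$ are both autoregressive over $\mathcal V^T$, we have
\[
D_{\rm KL}(P_{\rm data}\|P_\theta)=\sum_{t=1}^T \mathbb E_{h\sim P_{\rm data}(X_{<t})}\Big[D_{\rm KL}\big(q_t(\cdot\mid h)\,\|\,r_{t,\theta}(\cdot\mid h)\big)\Big].
\]
Only prefixes with $(t,h)\in\mathcal P_T^{\rm supp}$ carry positive weight. It therefore suffices to show, for each such prefix,
\[
D_{\rm KL}(q_t\|r_{t,\theta})\le -\log(1-\alpha)+2\eta.
\]
Averaging and summing over $t$ then gives the factor $T$. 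Every term is finite because softmax outputs are strictly positive.

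\textbf{Step 2 (split the log-ratio).} Write
\[
\log\frac{q_t}{r_{t,\theta}}=\log\frac{q_t}{q_{t,\alpha}}+\log\frac{q_{t,\alpha}}{r_{t,\theta}}.
\]
For the first term, $q_{t,\alpha}(v)\ge(1-\alpha)q_t(v)$ pointwise, so $\log(q_t/q_{t,\alpha})\le -\log(1-\alpha)$ wherever $q_t(v)>0$. For the second term, write $\ell=\widehat\ell_{t,\theta}(h)$ and $u=U_\alpha(t,h)$. Then $\log q_{t,\alpha}=u$, since $q_{t,\alpha}$ sums to one and so $\log\sum_v e^{u_v}=0$. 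Also $\log r_{t,\theta}(v)=\ell_v-\log\sum_w e^{\ell_w}$. This gives
\[
\log\frac{q_{t,\alpha}(v)}{r_{t,\theta}(v)}=(u_v-\ell_v)+\Big(\log\sum_w e^{\ell_w}-\log\sum_w e^{u_w}\Big).
\]
The first bracket is at most $\eta$ by hypothesis~\eqref{eq:uniform-logit-approx}. The log-sum-exp function is $1$-Lipschitz in $\|\cdot\|_\infty$, so the second bracket is also at most $\eta$. Hence the second term is at most $2\eta$ pointwise.

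\textbf{Step 3 (conclude).} Take the expectation under $q_t(\cdot\mid h)$ of the pointwise bound from Step 2. This gives the per-step bound $-\log(1-\alpha)+2\eta$, and the chain rule from Step 1 gives the main inequality. For the consequence, substitute the stated choices of $\alpha$ and $\eta$:
\[
-\log(1-\alpha)=\frac{\varepsilon_{\rm KL}}{2T},\qquad 2\eta=\frac{\varepsilon_{\rm KL}}{2T}.
\]
These sum to $\varepsilon_{\rm KL}/T$, so the total bound is $\varepsilon_{\rm KL}$. Both choices are admissible because $\alpha\in(0,1)$ and $\eta>0$.

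The only delicate point is the normalization in Step 2. The model logits are not normalized, so one cannot compare $\ell$ with $\log q_{t,\alpha}$ directly. The log-sum-exp Lipschitz estimate handles this, and it is where the factor $2$ in $2\eta$ comes from. The remaining care is restricting attention to supported prefixes and to tokens with $q_t(v)>0$, which is legitimate because zero-probability terms contribute nothing to the KL.
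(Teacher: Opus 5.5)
Your proposal is correct and follows essentially the same route as the paper: the KL chain rule over data-supported prefixes, the split $D_{\mathrm{KL}}(q\|q_\alpha)+\sum_v q(v)\log\frac{q_\alpha(v)}{r(v)}$, the bound $q_\alpha\ge(1-\alpha)q$ for the smoothing term, and a $2\eta$ bound for the logit term. Your 1-Lipschitz log-sum-exp step is exactly the paper's estimate $\log Z_\Delta\le\eta$ with $Z_\Delta=\sum_u q_\alpha(u)e^{\Delta_u}$, because $\log Z_\Delta=\log\sum_w e^{\ell_w}-\log\sum_w e^{u_w}$.
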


\begin{proof}
Fix \(t\in[T]\) and a data-supported prefix
\(h\in\mathcal V^{t-1}\). To simplify notation, write
\[
    q(v) := q_t(v\mid h),
    \qquad
    q_\alpha(v) := q_{t,\alpha}(v\mid h),
    \qquad
    r(v) := r_{t,\theta}(v\mid h).
\]
Let
\[
    \ell_\alpha^v := \log q_\alpha(v),
    \qquad
    \widehat \ell^v := \widehat \ell_{t,\theta}^v(h).
\]
By the uniform logit approximation assumption,
\[
    \widehat \ell^v
    =
    \ell_\alpha^v
    +
    \Delta_v,
    \qquad
    |\Delta_v|\leq \eta
    \quad
    \text{for all }v\in\mathcal V.
\]
Since \(q_\alpha=\mathrm{softmax}(\ell_\alpha)\), the model distribution can be
written as
\begin{align}
    r(v)
    &=
    \frac{
        \exp(\widehat \ell^v)
    }{
        \sum_{u\in\mathcal V}
        \exp(\widehat \ell^u)
    }                                                     \nonumber\\
    &=
    \frac{
        q_\alpha(v)\exp(\Delta_v)
    }{
        \sum_{u\in\mathcal V}
        q_\alpha(u)\exp(\Delta_u)
    }.
    \label{eq:r-in-terms-of-qalpha}
\end{align}
Define
\[
    Z_\Delta
    :=
    \sum_{u\in\mathcal V}
    q_\alpha(u)\exp(\Delta_u).
\]
Then
\[
    r(v)
    =
    \frac{
        q_\alpha(v)\exp(\Delta_v)
    }{
        Z_\Delta
    }.
\]

We first bound the conditional KL at the fixed prefix \(h\):
\begin{align}
    D_{\mathrm{KL}}(q\|r)
    &=
    \sum_{v\in\mathcal V}
    q(v)
    \log
    \frac{q(v)}{r(v)}                                                    \nonumber\\
    &=
    \sum_{v\in\mathcal V}
    q(v)
    \log
    \frac{q(v)}{q_\alpha(v)}
    +
    \sum_{v\in\mathcal V}
    q(v)
    \log
    \frac{q_\alpha(v)}{r(v)}                                             \nonumber\\
    &=
    D_{\mathrm{KL}}(q\|q_\alpha)
    +
    \sum_{v\in\mathcal V}
    q(v)
    \log
    \frac{q_\alpha(v)}{r(v)} .
    \label{eq:conditional-kl-split}
\end{align}

We bound the two terms separately. For the smoothing term, observe that
\[
    q_\alpha(v)
    =
    (1-\alpha)q(v)
    +
    \frac{\alpha}{M}
    \geq
    (1-\alpha)q(v).
\]
Therefore, whenever \(q(v)>0\),
\[
    \frac{q(v)}{q_\alpha(v)}
    \leq
    \frac{1}{1-\alpha}.
\]
Hence
\begin{equation}
    D_{\mathrm{KL}}(q\|q_\alpha)
    \leq
    -\log(1-\alpha).
    \label{eq:smoothing-kl-bound}
\end{equation}

For the second term in \eqref{eq:conditional-kl-split}, using
\eqref{eq:r-in-terms-of-qalpha},
\[
    \log\frac{q_\alpha(v)}{r(v)}
    =
    -\Delta_v
    +
    \log Z_\Delta.
\]
Thus
\begin{align}
    \sum_{v\in\mathcal V}
    q(v)
    \log
    \frac{q_\alpha(v)}{r(v)}
    &=
    -
    \sum_{v\in\mathcal V}
    q(v)\Delta_v
    +
    \log Z_\Delta .
    \label{eq:logit-error-term}
\end{align}
Since \(|\Delta_v|\leq \eta\), we have
\[
    -
    \sum_{v\in\mathcal V}
    q(v)\Delta_v
    \leq
    \eta.
\]
Also,
\[
    Z_\Delta
    =
    \sum_{u\in\mathcal V}
    q_\alpha(u)\exp(\Delta_u)
    \leq
    \sum_{u\in\mathcal V}
    q_\alpha(u)e^\eta
    =
    e^\eta,
\]
so
\[
    \log Z_\Delta
    \leq
    \eta.
\]
Substituting these two inequalities into \eqref{eq:logit-error-term} gives
\begin{equation}
    \sum_{v\in\mathcal V}
    q(v)
    \log
    \frac{q_\alpha(v)}{r(v)}
    \leq
    2\eta.
    \label{eq:logit-error-kl-bound}
\end{equation}
Combining \eqref{eq:smoothing-kl-bound} and \eqref{eq:logit-error-kl-bound},
we obtain, for every data-supported prefix \(h\),
\begin{equation}
    D_{\mathrm{KL}}
    \left(
        q_t(\cdot\mid h)
        \,\middle\|\,
        r_{t,\theta}(\cdot\mid h)
    \right)
    \leq
    -\log(1-\alpha)
    +
    2\eta .
    \label{eq:conditional-kl-final-bound}
\end{equation}

Now apply the KL chain rule for autoregressive distributions:
\begin{align}
    D_{\mathrm{KL}}
    \left(
        P_{\mathrm{data}}
        \,\middle\|\,
        P_\theta
    \right)
    &=
    \sum_{t=1}^{T}
    \mathbb E_{X\sim P_{\mathrm{data}}}
    \left[
        D_{\mathrm{KL}}
        \left(
            q_t(\cdot\mid X_{<t})
            \,\middle\|\,
            r_{t,\theta}(\cdot\mid X_{<t})
        \right)
    \right]                                                        \nonumber\\
    &\leq
    \sum_{t=1}^{T}
    \left(
        -\log(1-\alpha)
        +
        2\eta
    \right)                                                        \nonumber\\
    &=
    T
    \left(
        -\log(1-\alpha)
        +
        2\eta
    \right).
    \label{eq:chain-rule-final-bound}
\end{align}
This proves \eqref{eq:kl-bound-alpha-eta}.

Finally, choose
\[
    \alpha
    =
    1-\exp\left(
        -\frac{\varepsilon_{\mathrm{KL}}}{2T}
    \right),
    \qquad
    \eta
    =
    \frac{\varepsilon_{\mathrm{KL}}}{4T}.
\]
Then
\[
    -\log(1-\alpha)
    =
    \frac{\varepsilon_{\mathrm{KL}}}{2T},
    \qquad
    2\eta
    =
    \frac{\varepsilon_{\mathrm{KL}}}{2T}.
\]
Therefore,
\[
    D_{\mathrm{KL}}
    \left(
        P_{\mathrm{data}}
        \,\middle\|\,
        P_\theta
    \right)
    \leq
    T
    \left(
        \frac{\varepsilon_{\mathrm{KL}}}{2T}
        +
        \frac{\varepsilon_{\mathrm{KL}}}{2T}
    \right)
    =
    \varepsilon_{\mathrm{KL}}.
\]
This completes the proof.
\end{proof}

\paragraph{Why universal approximation gives the logit approximation.}
The prefix set \(\mathcal P_T^{\mathrm{supp}}\) is finite. After embedding tokens
and positions into a Euclidean space, the target map \(U_\alpha\) is a finite table of bounded vectors in
\(\mathbb R^M\). Since finite-domain functions can be extended to continuous
functions on compact neighborhoods, the transformer universal approximation
theorem can be applied to this continuous extension. Therefore, for any
\(\eta>0\), there exists a transformer with a vocabulary logit head satisfying
\eqref{eq:uniform-logit-approx}. Theorem~\ref{thm:kl-control} then implies that
the induced autoregressive model can make
\[
    D_{\mathrm{KL}}
    \left(
        P_{\mathrm{data}}
        \,\middle\|\,
        P_\theta
    \right)
\]
arbitrarily small.

\paragraph{Full-support special case.}
If every true conditional has full support, i.e.,
\(q_t(v\mid h)>0\) for all data-supported prefixes \(h\) and all
\(v\in\mathcal V\), then smoothing is unnecessary. One may set
\[
    H(t,h)
    =
    \left(
        \log q_t(v\mid h)
    \right)_{v\in\mathcal V}.
\]
If the transformer approximates this unsmoothed logit map with error \(\eta\),
then the same proof with \(\alpha=0\) gives
\[
    D_{\mathrm{KL}}
    \left(
        P_{\mathrm{data}}
        \,\middle\|\,
        P_\theta
    \right)
    \leq
    2T\eta.
\]
Thus choosing \(\eta=\varepsilon_{\mathrm{KL}}/(2T)\) yields
\[
    D_{\mathrm{KL}}
    \left(
        P_{\mathrm{data}}
        \,\middle\|\,
        P_\theta
    \right)
    \leq
    \varepsilon_{\mathrm{KL}}.
\]

\paragraph{Remark on the approximation mode.}
The KL argument above requires the uniform logit approximation
\eqref{eq:uniform-logit-approx} on all data-supported prefixes. If one invokes
a universal approximation theorem stated only in an \(L^p\) metric over a
continuous domain, that statement alone does not control pointwise errors on
individual token prefixes. In that case, one should either use a uniform
approximation version, or explicitly extend the finite prefix table to small
positive-measure neighborhoods and approximate uniformly on those neighborhoods.

\paragraph{From qualitative approximation to rates.}
The previous theorem is qualitative: it shows that the pretraining KL can be
made arbitrarily small whenever the Transformer logit class can approximate the
smoothed target logits uniformly on supported prefixes. It does not quantify how
large the architecture must be to achieve a prescribed KL error. We next refine
this statement by replacing the abstract logit error \(\eta\) with an explicit
Transformer approximation rate. This yields a rate for the autoregressive KL in
terms of the attention approximation budget and the feed-forward approximation
budget.

\subsection{Approximation Rate for the Autoregressive KL}
\label{appen:kl-approx-rate}

We now refine Theorem~\ref{thm:kl-control} by incorporating a Jackson-type
approximation rate for Transformers. The approximation-rate result of
\cite{jiang2024approximation} shows that, for targets with suitable
Transformer-adapted complexity, the approximation error decomposes into an
attention low-rank term and a feed-forward approximation term.

To avoid notational conflict with the smoothing parameter, we use \(a>1/2\)
for the decay exponent of the temporal coupling component, and \(b>0\) for the
feed-forward approximation exponent.

For \(\alpha\in(0,1)\), define the smoothed true conditional
\begin{equation}
    q_{t,\alpha}(v\mid h)
    :=
    (1-\alpha)q_t(v\mid h)
    +
    \frac{\alpha}{M},
    \qquad v\in\mathcal V.
    \label{eq:smoothed-conditional-rate}
\end{equation}
The corresponding smoothed target logit map is
\begin{equation}
    U_\alpha^v(t,h)
    :=
    \log q_{t,\alpha}(v\mid h),
    \qquad v\in\mathcal V.
    \label{eq:smoothed-target-logit-rate}
\end{equation}

\paragraph{Jackson-type logit approximation assumption.}
For each vocabulary coordinate \(v\in\mathcal V\), assume that the scalar
sequence-to-sequence target
\[
    U_\alpha^v
    =
    \{H_{\alpha,t}^v\}_{t=1}^T
\]
belongs to the Transformer approximation space \(C^{(a,b)}\) of
\cite{jiang2024approximation}. Let \(m_h\) denote the hidden dimension of the
attention mechanism, and let \(m_{\mathrm{FF}}\) denote the approximation budget
of the feed-forward networks.

Following the Jackson-type rate, define
\begin{equation}
    \mathcal R_\alpha^v(m_h,m_{\mathrm{FF}})
    :=
    T^2 C_{0,\alpha}^v
    \left(
        \frac{C_{1,\alpha}^v}{m_h^{2a-1}}
        +
        \frac{
            C_{2,\alpha}^v(m_h)m_h^{b+1}
        }{
            m_{\mathrm{FF}}^b
        }
    \right),
    \label{eq:coordinate-rate}
\end{equation}
where \(C_{0,\alpha}^v\), \(C_{1,\alpha}^v\), and
\(C_{2,\alpha}^v(m_h)\) are the corresponding complexity measures of the
target logit coordinate \(U_\alpha^v\). Define the worst-coordinate rate
\begin{equation}
    \mathcal R_\alpha(m_h,m_{\mathrm{FF}})
    :=
    \max_{v\in\mathcal V}
    \mathcal R_\alpha^v(m_h,m_{\mathrm{FF}}).
    \label{eq:worst-coordinate-rate}
\end{equation}

Since our KL objective is defined on discrete token prefixes, we assume the
Transformer logit class realizes the above Jackson rate uniformly on
data-supported prefixes:
\begin{equation}
    \max_{(t,h)\in\mathcal P_T^{\mathrm{supp}}}
    \max_{v\in\mathcal V}
    \left|
        \widehat \ell_{t,\theta}^v(h)
        -
        U_\alpha^v(t,h)
    \right|
    \leq
    \mathcal R_\alpha(m_h,m_{\mathrm{FF}}).
    \label{eq:uniform-jackson-logit-rate}
\end{equation}

\begin{theorem}[Jackson-type approximation rate for autoregressive KL]
\label{thm:kl-rate-main}
Suppose
that the smoothed target logit map \(U_\alpha\) satisfies the uniform
Jackson-type logit approximation rate
\eqref{eq:uniform-jackson-logit-rate}. Then the autoregressive model
\(P_\theta\) satisfies
\begin{equation}
    D_{\mathrm{KL}}
    \left(
        P_{\mathrm{data}}
        \,\middle\|\,
        P_\theta
    \right)
    \leq
    T
    \left(
        -\log(1-\alpha)
        +
        2\mathcal R_\alpha(m_h,m_{\mathrm{FF}})
    \right).
    \label{eq:kl-rate-main}
\end{equation}
Equivalently,
\begin{equation}
    D_{\mathrm{KL}}
    \left(
        P_{\mathrm{data}}
        \,\middle\|\,
        P_\theta
    \right)
    \leq
    T
    \left[
        -\log(1-\alpha)
        +
        2
        \max_{v\in\mathcal V}
        T^2 C_{0,\alpha}^v
        \left(
            \frac{C_{1,\alpha}^v}{m_h^{2a-1}}
            +
            \frac{
                C_{2,\alpha}^v(m_h)m_h^{b+1}
            }{
                m_{\mathrm{FF}}^b
            }
        \right)
    \right].
    \label{eq:expanded-kl-rate}
\end{equation}
\end{theorem}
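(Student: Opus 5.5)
The plan is to reduce Theorem~\ref{thm:kl-rate-main} to Theorem~\ref{thm:kl-control} by instantiating its abstract logit-error parameter with the Jackson rate. Set
\[
\eta := \mathcal R_\alpha(m_h,m_{\mathrm{FF}}).
\]
The standing assumption \eqref{eq:uniform-jackson-logit-rate} then says exactly that
\[
\max_{(t,h)\in\mathcal P_T^{\mathrm{supp}}}\max_{v\in\mathcal V}\bigl|\widehat\ell^v_{t,\theta}(h)-\log q_{t,\alpha}(v\mid h)\bigr|\le\eta .
\]
This holds because $U_\alpha^v(t,h)=\log q_{t,\alpha}(v\mid h)$ by \eqref{eq:smoothed-target-logit-rate}, so the hypothesis \eqref{eq:uniform-logit-approx_main} of Theorem~\ref{thm:kl-control} is satisfied verbatim.

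Before invoking the theorem, I will check that $\eta$ is a legitimate finite nonnegative number. For fixed $m_h,m_{\mathrm{FF}}\ge 1$, each coordinate rate in \eqref{eq:coordinate-rate} is a finite combination of the complexity constants $C^v_{0,\alpha},C^v_{1,\alpha},C^v_{2,\alpha}(m_h)$, which are finite by the membership $U^v_\alpha\in C^{(a,b)}$. The maximum over the finite vocabulary $\mathcal V$ is therefore finite.

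Theorem~\ref{thm:kl-control} was stated for $\eta>0$, but its proof only uses $|\Delta_v|\le\eta$. That gives $-\sum_v q(v)\Delta_v\le\eta$ and $\log Z_\Delta\le\eta$, so $\eta\ge 0$ suffices. If one prefers to stay strictly within the stated hypothesis, one can apply the theorem with $\eta+\delta$ and let $\delta\downarrow 0$.

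Applying Theorem~\ref{thm:kl-control} gives
\[
D_{\mathrm{KL}}(P_{\mathrm{data}}\|P_\theta)\le T\bigl(-\log(1-\alpha)+2\eta\bigr),
\]
which is \eqref{eq:kl-rate-main}. Substituting the definition \eqref{eq:worst-coordinate-rate} of $\mathcal R_\alpha$ as a maximum of \eqref{eq:coordinate-rate} over $v$ yields \eqref{eq:expanded-kl-rate} directly.

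I do not expect a genuine technical obstacle, since the argument is essentially a substitution. The step needing care is interpretive: the rate of \cite{jiang2024approximation} is stated for sequence-to-sequence targets on a continuous domain. Its transfer to a uniform bound on the finite set of data-supported discrete prefixes is not proved here; it is imposed as the assumption \eqref{eq:uniform-jackson-logit-rate}, in the spirit of the remark on the approximation mode. I will flag in the proof that the conclusion is conditional on this uniform realization and does not follow from an $L^p$-type rate alone.
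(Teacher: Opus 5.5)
Your proposal is correct and is essentially the paper's argument. The paper re-runs the proof of Theorem~\ref{thm:kl-control} inline with $\eta$ replaced by $\mathcal R_\alpha(m_h,m_{\mathrm{FF}})$: the smoothing term is bounded by $-\log(1-\alpha)$, the logit-error term by $2\mathcal R_\alpha$, and the KL chain rule sums these over $t$. You reach the same bound by invoking that theorem directly, and your handling of the $\eta>0$ hypothesis (via $\eta\ge 0$ in the proof, or the $\eta+\delta$, $\delta\downarrow 0$ limit) is a small extra care the paper sidesteps by redoing the derivation.
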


\begin{proof}
Fix a time \(t\in[T]\) and a data-supported prefix
\(h\in\mathcal V^{t-1}\). Write
\[
    q(v):=q_t(v\mid h),
    \qquad
    q_\alpha(v):=q_{t,\alpha}(v\mid h),
    \qquad
    r(v):=r_{t,\theta}(v\mid h).
\]
Let
\[
    \ell_\alpha^v := \log q_\alpha(v),
    \qquad
    \widehat \ell^v := \widehat \ell_{t,\theta}^v(h).
\]
By \eqref{eq:uniform-jackson-logit-rate}, there exist errors
\(\Delta_v\) such that
\[
    \widehat \ell^v
    =
    \ell_\alpha^v
    +
    \Delta_v,
    \qquad
    |\Delta_v|
    \leq
    \mathcal R_\alpha(m_h,m_{\mathrm{FF}})
    \quad
    \forall v\in\mathcal V.
\]
Since \(q_\alpha=\mathrm{softmax}(\ell_\alpha)\), the model distribution can
be written as
\begin{equation}
    r(v)
    =
    \frac{
        q_\alpha(v)\exp(\Delta_v)
    }{
        \sum_{u\in\mathcal V}
        q_\alpha(u)\exp(\Delta_u)
    }.
    \label{eq:model-rate-softmax-rewrite}
\end{equation}
Define
\[
    Z_\Delta
    :=
    \sum_{u\in\mathcal V}
    q_\alpha(u)\exp(\Delta_u).
\]
Then
\[
    r(v)
    =
    \frac{
        q_\alpha(v)\exp(\Delta_v)
    }{
        Z_\Delta
    }.
\]

We decompose the conditional KL:
\begin{align}
    D_{\mathrm{KL}}(q\|r)
    &=
    \sum_{v\in\mathcal V}
    q(v)\log\frac{q(v)}{r(v)}
    \nonumber\\
    &=
    \sum_{v\in\mathcal V}
    q(v)\log\frac{q(v)}{q_\alpha(v)}
    +
    \sum_{v\in\mathcal V}
    q(v)\log\frac{q_\alpha(v)}{r(v)}
    \nonumber\\
    &=
    D_{\mathrm{KL}}(q\|q_\alpha)
    +
    \sum_{v\in\mathcal V}
    q(v)\log\frac{q_\alpha(v)}{r(v)}.
    \label{eq:conditional-kl-rate-split}
\end{align}

First, because
\[
    q_\alpha(v)
    =
    (1-\alpha)q(v)
    +
    \frac{\alpha}{M}
    \geq
    (1-\alpha)q(v),
\]
we have
\begin{equation}
    D_{\mathrm{KL}}(q\|q_\alpha)
    \leq
    -\log(1-\alpha).
    \label{eq:smoothing-rate-bound}
\end{equation}

Second, from \eqref{eq:model-rate-softmax-rewrite},
\[
    \log\frac{q_\alpha(v)}{r(v)}
    =
    -\Delta_v
    +
    \log Z_\Delta.
\]
Therefore,
\begin{align}
    \sum_{v\in\mathcal V}
    q(v)\log\frac{q_\alpha(v)}{r(v)}
    &=
    -
    \sum_{v\in\mathcal V}
    q(v)\Delta_v
    +
    \log Z_\Delta.
    \label{eq:rate-error-term}
\end{align}
Since
\[
    |\Delta_v|
    \leq
    \mathcal R_\alpha(m_h,m_{\mathrm{FF}}),
\]
we get
\[
    -
    \sum_{v\in\mathcal V}
    q(v)\Delta_v
    \leq
    \mathcal R_\alpha(m_h,m_{\mathrm{FF}}).
\]
Also,
\[
    Z_\Delta
    =
    \sum_{u\in\mathcal V}
    q_\alpha(u)\exp(\Delta_u)
    \leq
    \exp\left(
        \mathcal R_\alpha(m_h,m_{\mathrm{FF}})
    \right),
\]
so
\[
    \log Z_\Delta
    \leq
    \mathcal R_\alpha(m_h,m_{\mathrm{FF}}).
\]
Thus,
\begin{equation}
    \sum_{v\in\mathcal V}
    q(v)\log\frac{q_\alpha(v)}{r(v)}
    \leq
    2\mathcal R_\alpha(m_h,m_{\mathrm{FF}}).
    \label{eq:logit-rate-kl-bound}
\end{equation}

Combining \eqref{eq:smoothing-rate-bound} and
\eqref{eq:logit-rate-kl-bound}, for every data-supported prefix \(h\),
\begin{equation}
    D_{\mathrm{KL}}
    \left(
        q_t(\cdot\mid h)
        \,\middle\|\,
        r_{t,\theta}(\cdot\mid h)
    \right)
    \leq
    -\log(1-\alpha)
    +
    2\mathcal R_\alpha(m_h,m_{\mathrm{FF}}).
    \label{eq:conditional-kl-rate-final}
\end{equation}

Using the autoregressive KL chain rule,
\begin{align}
    D_{\mathrm{KL}}
    \left(
        P_{\mathrm{data}}
        \,\middle\|\,
        P_\theta
    \right)
    &=
    \sum_{t=1}^{T}
    \mathbb E_{X\sim P_{\mathrm{data}}}
    \left[
        D_{\mathrm{KL}}
        \left(
            q_t(\cdot\mid X_{<t})
            \,\middle\|\,
            r_{t,\theta}(\cdot\mid X_{<t})
        \right)
    \right]
    \nonumber\\
    &\leq
    \sum_{t=1}^{T}
    \left(
        -\log(1-\alpha)
        +
        2\mathcal R_\alpha(m_h,m_{\mathrm{FF}})
    \right)
    \nonumber\\
    &=
    T
    \left(
        -\log(1-\alpha)
        +
        2\mathcal R_\alpha(m_h,m_{\mathrm{FF}})
    \right).
\end{align}
This proves \eqref{eq:kl-rate-main}. Substituting the definition of
\(\mathcal R_\alpha(m_h,m_{\mathrm{FF}})\) gives
\eqref{eq:expanded-kl-rate}.
\end{proof}

\paragraph{Interpretation.}
Theorem~\ref{thm:kl-rate-main} gives a quantitative version of the UAT-based
KL guarantee. The smoothing term \(T[-\log(1-\alpha)]\) is the price of making
the true next-token logits finite. The approximation term
\(2T\mathcal R_\alpha(m_h,m_{\mathrm{FF}})\) is the architectural
misspecification error. Thus, increasing the attention budget \(m_h\) and the
feed-forward budget \(m_{\mathrm{FF}}\) (i.e., larger model with higher expressiveness) reduces the KL whenever the smoothed
target logits have finite Transformer-adapted complexity.

\end{document}